\documentclass[final,3p,times]{elsarticle}



\usepackage[textsize=scriptsize,textwidth=1cm]{todonotes}
\newcommand{\tm}[1]{\vspace{0.5em}\todo[author=Tim,inline,color=green]{#1}\vspace{0.5em}}

\newcommand{\gh}[1]{\vspace{0.5em}\todo[author=Guang,inline,color=red!30!white]{#1}\vspace{0.5em}}


\usepackage{amssymb}

\usepackage{lineno}
\usepackage{verbatim}
\usepackage{natbib}

\usepackage{amsmath}
\usepackage{amsthm}
\usepackage{booktabs}

\usepackage{xcolor}
\usepackage{multirow}
\usepackage{multicol}
\usepackage{url}
\usepackage{tikz}
\usetikzlibrary{calc,fadings,decorations.pathreplacing}
\usepackage{siunitx}

\newcommand{\showDefinition}{DEFINITION}
\newtheoremstyle{definitionstyle}
  {5pt}
  {5pt}
  {}
  {}
  {\bfseries}
  {:}
  {\newline}
  {}

\theoremstyle{definitionstyle}
\newtheorem{Definition}{\showDefinition}[subsection]

\newcommand{\showPro}{Proposition}
\newtheoremstyle{propositionstyle}
  {5pt}
  {5pt}
  {}
  {}
  {\itshape\bfseries}
  {:}
  {\newline}
  {}

\theoremstyle{propositionstyle}
\newtheorem{Proposition}{\showPro}[subsection]

\theoremstyle{definition}
\newtheorem{example}{Example}[section]
\theoremstyle{definition}
\newtheorem{theorem}{Theorem}[section]














\newcommand{\f}{\mathit{f}}
\newcommand{\cc}{\mathit{c}}
\newcommand{\extf}{\mathbb{F}}

\newcommand{\sense}{\textbf{sense}}
\newcommand{\move}{\textbf{move}}
\newcommand{\shout}{\textbf{shout}}
\newcommand{\checking}{\textbf{@check}}

\newcommand{\movetoleft}{\textbf{move\_left}}
\newcommand{\movetoright}{\textbf{move\_right}}
\newcommand{\share}{\textbf{share}}

\newcommand{\turn}{\textbf{turn}}

\newcommand{\post}{\textbf{post}}




\journal{Journal Name}

\begin{document}

\begin{frontmatter}



\title{What you get is what you see: Decomposing Epistemic Planning using Functional STRIPS}


\author{Guang Hu, Tim Miller, and Nir Lipovetzky}

\address{School of Computing and Information Systems\\The University of Melbourne, Australia\\ghu1@student.unimelb.edu.au, \{tmiller,nir.lipovetzky\}@unimelb.edu.au}

\begin{abstract}

Epistemic planning --- planning with knowledge and belief --- is essential in many multi-agent and human-agent interaction domains. 
Most state-of-the-art epistemic planners solve this problem by compiling to propositional classical planning, for example, generating all possible knowledge atoms, or compiling epistemic formula to normal forms.
However, these methods become computationally infeasible as problems grow. 
In this paper, we decompose epistemic planning by delegating reasoning about epistemic formula to an external solver. We do this by modelling the problem using \emph{functional STRIPS}, which is more expressive than standard STRIPS and supports the use of external, black-box functions within action models. 
Exploiting recent work that demonstrates the relationship between what an agent `sees' and what it knows, we allow modellers to provide new implementations of externals functions. These define what agents see in their environment, allowing new epistemic logics to be defined without changing the planner. As a result, it increases the capability and flexibility of the epistemic model itself, and avoids the exponential pre-compilation step.
We ran evaluations on well-known epistemic planning benchmarks to compare  with an existing state-of-the-art planner, and on new scenarios based on different external functions. The results show that our planner scales significantly better than the state-of-the-art planner against which we compared, and can express problems more succinctly.

\end{abstract}

\begin{keyword}
Epistemic Planning \sep Functional STRIPS \sep Perspective Models


\end{keyword}

\end{frontmatter}


\section{Introduction}

Automated planning is a model-based approach to study sequential decision problems in AI \cite{DBLP:series/synthesis/2013Geffner}. Planning models describe the environment and the agents with concise planning languages, such as STRIPS. It then submits the description of the model to a general problem solver in order to find a sequence of actions to achieve a certain desired goal state. 
The description of the problem, in general, tracks the changes in the state of the environment.
However, in many scenarios, an agent needs to reason about the knowledge or belief of other agents in the environment. This concept is known as \emph{epistemic planning} \citep{DBLP:journals/jancl/BolanderA11}, a research topic that brings together the knowledge reasoning and planning communities. 

Epistemic logic is a formal account to perform inferences and updates about an agent's own knowledge and belief, including group and common knowledge in the presence of multiple agents 
\cite{Hintikka1962-HINKAB}.
Epistemic planning is concerned about action theories that can reason not only about variables representing the state of the world, but also the belief and knowledge that other agents have about those variables. Thus, epistemic planning intends to find the best course of action taking into account practical performance considerations when reasoning about knowledge and beliefs \citep{DBLP:journals/jancl/BolanderA11}.  \citeauthor{DBLP:journals/jancl/BolanderA11} (\citeyear{DBLP:journals/jancl/BolanderA11}) first used event-based models to study epistemic planning in both single and multi agent environments, and gave a formal definition of epistemic planning problems using Dynamic Epistemic Logic (DEL) \citep{DBLP:journals/corr/Bolander17}. 

There are typically two frameworks in which epistemic planning are studied. First, is to use DEL. This line of research investigates the decidability and complexity of epistemic planning and studies what type of problems it can solve  \cite{DBLP:conf/ijcai/WanYFLX15,DBLP:conf/ijcai/HuangFW017,DBLP:conf/ksem/Wu18}. The second is to extend existing planning languages and solvers to epistemic tasks \cite{DBLP:conf/aaai/MuiseBFMMPS15,DBLP:conf/atal/MuiseMFPS15,DBLP:conf/aips/KominisG15,DBLP:conf/aips/KominisG17,DBLP:conf/aips/LeFSP18}. In this paper, we take the latter approach. 

The complexity of epistemic planning is undecidable in the general case. Thus, one of the main challenges of epistemic planning concerns computational efficiency. 
The dominant approach in this area it to rely on compilations. These solutions pre-compile epistemic planning problems   into classical planning problems, using off-the-shelf classical planners to find solutions \cite{DBLP:conf/aaai/MuiseBFMMPS15,DBLP:conf/atal/MuiseMFPS15,DBLP:conf/aips/KominisG15,DBLP:conf/aips/KominisG17}; or pre-compile the epistemic formulae into specific normal forms for better performance during search \cite{DBLP:conf/ijcai/HuangFW017,DBLP:conf/ksem/Wu18}. Such approaches have shown to be fast at planning, but the compilation is computationally expensive.

This paper departs from previous approaches in two significant ways. First, we propose a model that exploits recent insights defining  what an agent knows as a function of what it `sees' \cite{DBLP:conf/ecai/CooperHMMR16,DBLP:conf/atal/GasquetGS14}. \citet{DBLP:conf/ecai/CooperHMMR16} define `seeing relations' as modal operators that `see' whether a proposition is true, and then define knowledge of a proposition $p$ as $K_i p \leftrightarrow p \land S_i p$; that is, if $p$ is true and agent sees $p$, then it knows $p$. Thus, the seeing modal operator is equivalent to the `knowing whether' operator \cite{DBLP:journals/rsl/FanWD15,DBLP:conf/aaai/MillerFMPS16}. We generalise the notion of seeing relations to \emph{perspective functions}, which are functions that determine which variables an agent sees. The domain of variables can be discrete or continuous, not just propositional. The basic implementation of perspective functions is just the same as seeing relations, however, we show that by changing the definition of perspective functions, we can establish new epistemic logics tailored to specific domains, such as Big Brother Logic \cite{DBLP:conf/atal/GasquetGS14}: a logic about visibility and knowledge in two-dimensional Euclidean planes.

Second, we show how to integrate perspective functions within functional STRIPS models as \emph{external functions} \cite{DBLP:conf/ijcai/FrancesRLG17}. External functions are black-box functions implanted in a programming language (in our case, C++), that can be called within action models.  Epistemic reasoning is delegated to external functions, where epistemic formulae are evaluated lazily, avoiding the exponential blow-up from epistemic formulae present in other compilation-based approaches. This delegation effectively decomposes epistemic reasoning from search, and allows us to implement our approach in any functional STRIPS planner that supports external functions. Further, the modeller can implement new perspective functions that are tied to specific domains, effectively defining new external solvers.

In our experiments we use a width-based functional STRIPS planner \cite{DBLP:conf/ijcai/FrancesRLG17} that is able to evaluate the truth value of epistemic fluents with external solvers, and solve a wide range of epistemic problems efficiently, included but not limited to, nested knowledge, distributed knowledge and common knowledge. We also show how modellers can implement different perspective functions as external functions in the functional STRIPS language, enabling the use of  domain-dependent epistemic logics. Departing from propositional logic give us flexibility to encode expressive epistemic formulae concisely. We compare our approach to a state-of-the-art epistemic planner that relies on a compilation to classical planning \cite{DBLP:conf/aaai/MuiseBFMMPS15}. The results show that, unlike in the compilation based approaches, the depth of nesting and the number of agents does not affect our performance, avoiding the exponential blow up due to our lazy evaluation of epistemic formulae. 

In the following sections we give a brief background on both epistemic logic and epistemic planning (Section \ref{sec:background}). We then introduce a new model, the \textit{agent perspective model} (Section~\ref{sec:model}). We discuss implementation details using a functional STRIPS planner (Section~\ref{sec:implementation}), and report experiments on several well-known benchmarks, along with two new scenarios to demonstrate the expressiveness of the proposed approach (Section~\ref{sec:experiments}).

\section{Background}\label{sec:background}
In this section, we briefly introduce the three main areas related to this work: (1) classical planning; (2) epistemic logic; and (3) epistemic planning. 

\subsection{Classical Planning}
\label{sec:background:classical-planning}

    
    Planning is the model based approach to action selection in AI, where the model is used to reason about which action an agent should do next \cite{DBLP:series/synthesis/2013Geffner}. Models vary depending on the assumptions imposed on the dynamics of the world, from classical where all actions have deterministic instantaneous effects and the world is fully known, up to temporal or POMDP models, where actions have durations or belief distributions about the state of the world. Models are described concisely through declarative languages such as STRIPS and PDDL \cite{fikes:strips,mcdermott:pddl}, general enough to allow the encoding of different problems, while at the same time revealing important structural information that allow planners to scale up. In fact, most planners rely on exploiting the structure revealed in the action theory to guide the search of solutions, from the very first general problem solver \cite{simon:gps} up to the latest computational approaches based on SAT, and heuristic search \cite{rintanen2012planning,DBLP:journals/jair/RichterW10}. On the other hand, declarative languages have limited the scope of planning, as certain environments representing planning models are difficult to encode declaratively, but are easily defined through simulators such as the Atari video games \cite{bellemare2013}. Consequently, a new family of width-based planners \cite{DBLP:conf/ecai/LipovetzkyG12, DBLP:conf/ecai/LipovetzkyG14,DBLP:conf/aips/LipovetzkyG17} have been proposed, broadening the scope of planning and have been shown to scale-up even when the planning model is described through simulators, only requiring the exposure of the state variables, but not imposing any syntax restriction on the action theory \cite{DBLP:conf/ijcai/FrancesRLG17}, where the denotation of some symbols can be given procedurally through simulators or by external theories. 

    In this paper we focus on epistemic planning, which considers the \emph{classical planning model} as a tuple ${\cal S} = \langle S,s_0,S_G,Act,A,f,c \rangle$ where $S$ is a set of states, $s_0 \in S$ is the initial state, $S_G \subseteq S$ is the set of goal states, $Act$ is the set of actions,  $A(s)$ is the subset actions applicable in  state $s$, $f$ is the transition function so that $f(a,s)$  represents  the state $s'$ that results from doing action $a$ in the state $s$, and $c(a,s)$ is a cost function. The solution to a classical planning model $\cal S$, called a plan, is  a sequence of actions that maps the initial state into a goal state, i.e., $\pi=\langle a_0,\ldots, a_n \rangle$ is a plan if there is a sequence of states $s_0, \ldots, s_{n+1}$ such that $a_i \in A(s_i)$ and $s_{i+1}=f(a_i,s_i)$ for $i=0,\ldots,n$ such that $s_{n+1} \in S_G$. The cost of plan $\pi$ is given by the sum of action costs $c(a_i,s_i)$ and a plan is optimal if there is no plan with smaller cost.

    
    A classical planning model can be represented in STRIPS by a tuple $P= \langle F,O,I,G \rangle$ where:
    $F$ is the set of all possible facts or propositions, $O$ the set of all actions, $I\subseteq F$ a set of all true facts in the initial situation, and $G\subseteq F$ a set of facts that needs to be true as the goal conditions.
    
    
    Besides the model, a solver, which is also called a planner, plays another important role in planning. One of the most successful computational approaches to planning is heuristic search. Besides the choice of search algorithm, the key feature which distinguishes planners is the heuristic function chosen~\cite{DBLP:conf/aips/HelmertD09}. To achieve good performance, the heuristic funstions should be as informed as possible. For example, one of the current best-performing planner, LAMA, uses a landmark-based heuristic derived from the model~\cite{DBLP:journals/jair/RichterW10} along with other delete-relaxation heuristics \cite{DBLP:series/synthesis/2013Geffner}. The downside is that most heuristics require the model to be encoded in STRIPS or PDDL, but not all the problem can be easy to modeled declaratively. 
    
    
    The standard planning languages and solvers do not support the use of procedures or external theories. One exception is the FS planner \cite{frances2015modeling}, that supports the Functional STRIPS language \cite{Geffner2000}, where the denotation of (non-fluent) function symbols can be given extensionally by means of atoms, or intensionally by means of procedures. Procedures also appear as an  extension of PDDL under the name of semantic attachments \cite{dornhege2009semantic}. The reason why procedures are not   ``first-class citizens'' in planning languages is that there was  no clear  way to deal with them that is both general and effective. Recently, a new family of algorithms have been proposed in classical planning known as Width-based planning \cite{DBLP:conf/ecai/LipovetzkyG12}. The latest planner known as Best-First Width Search (BFWS) \cite{DBLP:conf/aaai/LipovetzkyG17}, has been shown to scale up even in the presence of functional symbols defined procedurally \cite{DBLP:conf/ijcai/FrancesRLG17}. $\mathit{BFWS}$  exploits a concept called \textit{novelty} in order to prune or guide the search. The novelty of a new state is determined by the presence of values in its state variables that are seen for the first time during the search. It keeps checking when generating search nodes whether there is something novel about them. The novelty of the node is the size of the minimal tuple of state variables that are seen for the fist time during the search.
    
    The f-STRIPS planner using $BWFS$ has been compared over 380 problems with respect of the performance of FF*~\cite{DBLP:journals/jair/Helmert06}, LAMA-11~\cite{DBLP:journals/jair/RichterW10}, and BFWS($f_5$)~\cite{DBLP:conf/aaai/LipovetzkyG17}. The results show that the f-STRIPS BFWS planner performances is as good as BFWS($f_5$), which relies on a STRIPS model, and slightly better than LAMA-11. It is worth to mention that FF and LAMA have been the top-performing planners for last 15 years, and BFWS($f_5$) has been shown to win the agile track in the last International Planning Competition 2018, and is a state-of-art planner for classical planning. The f-STRIPS BFWS planner thus can cope with externally defined functional symbols while performing well with respect to other planners. Therefore, we choose their planner as our developing tool.\footnote{ The planner is available through \url{https://github.com/aig-upf/2017-planning-with-simulators}.}
    
    \subsection{Epistemic Logic}
\label{sec:background:epistemic-logic}    
    In this section, we give the necessary preliminaries for epistemic logic -- the logic of knowledge. Knowledge in a multi-agent system is not only about the environment, but also about the agents' knowledge  about the environment, and of agents' knowledge of others' knowledge about the environment, and so on.
    
    \citeauthor{Fagin:2003:RK:995831} \cite{Fagin:2003:RK:995831} provides a formal definition of epistemic logic as follows. Given a set of countable set of all primitive propositions {$Prop = \{p_1,p_2,...\}$} and a finite set of agents {$Agt = \{a_1,a_2,...\}$}, the syntax for epistemic logic is defined as:
    \[
    \varphi ::= p \mid \varphi_1 \land \varphi_2 \mid \neg \varphi \mid K_i\varphi,
    \]
    in which $p \in Prop$ and $i \in Agt$.
    
     $K_i\varphi$ represents that agent $i$ knows proposition $\varphi$, $\neg$ means negation and $\land$ means conjunction. Other operators such as disjunction and implication can be defined in the usual way.
    
     \citeauthor{Fagin:2003:RK:995831} \cite{Fagin:2003:RK:995831}  define the semantics of epistemic logic using Kripke structures, as standard in modal logic. A Kripke structure is a tuple $M=(S,\pi,\mathcal{K}_1,\dots,\mathcal{K}_n)$ where:
    \begin{itemize}
        \item $S$ is a non-empty set of states;
        \item $\pi(s)$ is an interpretation function: $Prop\mapsto\{true \mid false\}$; and
        \item $\mathcal{K}_1\dots\mathcal{K}_n$ represents the \emph{accessibility relations} over states for each of the $n$ agents in $Agt$.
    \end{itemize}
    
    Given a state $s$ and a proposition $p$, the evaluation of $p$ over $s$ is $\pi(s)(p)$. If $p$ is true in $s$, then $\pi(s)(p)$ must be $true$, and vice-versa. $\mathcal{K}_i$ for agent $i$ is a binary relation over states, which is the key to reason about knowledge. For any pair of states $s$ and $t$, if $(s,t)\in \mathcal{K}_i$, then we can say agent $i$ cannot distinguish between $t$ and $s$ when in state $s$. In other words, if agent $i$ is in $s$, the agent can consider $t$ as the current state based on all the information it can obtain from state $s$. With this definition of Kripke structure, we can define the semantics of knowledge. 
    
    Given a state $s$, a proposition $p$, a propositional formula $\varphi$ and a Kripke structure $M$, the truth of two basic formulae are defined as follows:
    \begin{itemize}
        \item $(M,s)\ \vDash\ p$ iff $\pi(s)(p)$=$true$
        \item $(M,s)\ \vDash\ K_i \varphi$ iff $(M,t)\ \vDash\ \varphi$ for all such that $(s,t)\in \mathcal{K}_i$
    \end{itemize}
    $(M,s)\ \vDash\ p$ represents $p$ is true at $(M,s)$, which means, the evaluation of $p$ in state $s$, $\pi(s)(p)$ must  be true. Standard propositional logic rules define conjunction and negation. $(M,s)\ \vDash\ K_i \varphi$ is defined by agent $\varphi$ being true at all worlds $t$ reachable from $s$ via the accessibility relation $\mathcal{K}_i$.  This allows knowledge to be nested; for example, $K_a K_b p$ represents that agent $a$ knows that agent $b$ knows $p$, which means $p$ is true at all worlds reachable by applying accessibility relation $\mathcal{K}_a$ followed by $\mathcal{K}_b$. 
    
    From these basic operators, the concept of group knowledge can  be defined. For this, the grammar above is extended to:
    \[
    \varphi ::= p \mid \varphi_1 \land \varphi_2 \mid \neg \varphi \mid K_i\varphi \mid E_G\varphi \mid D_G\varphi \mid C_G\varphi,
    \]
    in which $p \in Prop$, $i \in Agt$, and $G$ is a non-empty set of agents such that $G \subseteq Agt$.
    
    $E_G\varphi$ represents that everyone in group $G$ knows $\varphi$ and $C_G\varphi$ represents that it is \emph{commonly known} in group $G$ that $\varphi$ is true, which means that everyone knows $\varphi$, and everyone knows that everyone knows $\varphi$, \emph{ad infinitum}. $D_G\varphi$ represents \emph{distributed} knowledge, which means we combine the knowledge of the set of agents $G$ such that $G$ knows $\varphi$, even though it may be that no individual in the group knows $\varphi$. 
    
    The semantics for these group operators are defined as follows:
    \begin{itemize}
        \item $(M,s)\ \vDash\ E_G \varphi$ iff $(M,s)\ \vDash\ K_i \varphi$ for all $i \in G$; 
        \item $(M,s)\ \vDash\ C_G \varphi$ iff $(M,t)\ \vDash\ \varphi$ for all $t$ that are $G$-reachable from $s$
        \item $(M,s)\ \vDash\ D_G \varphi$ iff $(M,t)\ \vDash\ \varphi$ for all $t$ such that $(s,t) \in \cap_{i \in G} \mathcal{K}_i$
    \end{itemize}
    
     By definition, $(M,s)\ \vDash\ E_G \varphi$ will be true, if and only if, $\varphi$ is known by all agents in $G$. Common knowledge in world $s$, $(M,s)\ \vDash\ C_G \varphi$ is defined as: in all worlds $t$, which are reachable by following the accessibility relations of all agents in $G$, $\varphi$ is true. For distributed knowledge, $(M,s)\ \vDash\ D_G \varphi$ is true, if and only if, in all the possible worlds that any one agent from $G$ consider possible, $\varphi$ is true; in other words, we eliminate worlds that any agent in $G$ knows to be impossible.
    
    \subsubsection{Seeing and Knowledge}
    
    Recently \citeauthor{DBLP:conf/atal/GasquetGS14}\cite{DBLP:conf/atal/GasquetGS14} noted the relationship between what an agent sees and what it knows. They define a more specific task of logically modeling and reasoning about cooperating tasks of vision-based agents, which they called \emph{Big Brother Logic} (BBL).  Their framework models multi-agent knowledge in a continuous environment of vision, which has great potential applications such as reasoning over cameras inputs, autonomous robots and vehicles. They introduced the semantics of their model and its extensions on natural geometric models.
    
    \begin{figure}
        \centering

\newcommand{\range}{30}
\newcommand{\size}{3}
\newcommand{\oversize}{6}
\makeatletter
\newcommand{\gettikzxy}[3]{%
  \tikz@scan@one@point\pgfutil@firstofone#1\relax
  \edef#2{\the\pgf@x}%
  \edef#3{\the\pgf@y}%
}
\makeatother

\begin{tikzpicture}

    \begin{scope}[transparency group]
        \begin{scope}[blend mode=multiply]
            \fill[ opacity=0.5,blue!30] (0,0) -- ({cos(\range)*\oversize},{sin(\range)*\oversize}) -- ({cos(\range)*\size*4},{sin(\range)*\oversize}) -- ({cos(\range)*\size*4},-{sin(\range)*\oversize}) --({cos(\range)*\oversize},-{sin(\range)*\oversize}) -- cycle;
            \fill[ opacity=0.5,yellow!30] ({cos(\range)*\size*2},0) -- ({cos(\range)*\size*2-cos(\range)*\oversize},{sin(\range)*\oversize}) -- (-{cos(\range)*\size*2},{sin(\range)*\oversize}) -- (-{cos(\range)*\size*2},-{sin(\range)*\oversize}) -- ({cos(\range)*\size*2-cos(\range)*\oversize},-{sin(\range)*\oversize}) -- cycle;
        \end{scope}
    \end{scope}

    \coordinate (origin) at (0,0);
    \coordinate (a1) at (0,0);
    \coordinate (a1up) at ({cos(\range)*\oversize},{sin(\range)*\oversize});
    \coordinate (a1down) at ({cos(\range)*\oversize},-{sin(\range)*\oversize});
    \coordinate (a2) at ({cos(\range)*\size*2},0);
    \coordinate (a2up) at ({cos(\range)*\size*2-cos(\range)*\oversize},{sin(\range)*\oversize});
    \coordinate (a2down) at ({cos(\range)*\size*2-cos(\range)*\oversize},-{sin(\range)*\oversize});

    \draw[thick] let    \p{1} = (a1)    in (a1 |- origin)
        node[circle,fill=red!80,inner sep=2pt,
        label={[align=center]below:
                $a_1$ \\ 
                (\pgfmathparse{(\x1/28.45274)}\num[round-mode=places,round-precision=1]{\pgfmathresult},
                \pgfmathparse{(\y1/28.45274)}\num[round-mode=places,round-precision=1]{\pgfmathresult})
                }] at (\x1,\y1) {};
    \draw[->, gray, thick] (a1) -- (a1up);
    \draw[->, gray, thick] (a1) -- (a1down);

    \draw[thick] let    \p{1} = (a2)    in (a1 |- origin)
        node[circle,fill=red!80,inner sep=2pt,
        label={[align=center]below:
                $a_2$ \\ 
                (\pgfmathparse{(\x1/28.45274)}\num[round-mode=places,round-precision=1]{\pgfmathresult},
                \pgfmathparse{(\y1/28.45274)}\num[round-mode=places,round-precision=1]{\pgfmathresult})
                }] at (\x1,\y1) {};
    \draw[->, gray, thick] (a2) -- (a2up);
    \draw[->, gray, thick] (a2) -- (a2down);
    
    \coordinate (top_left) at (-{cos(\range)*\size*2},{sin(\range)*\oversize});
    \coordinate (bottom_left) at (-{cos(\range)*\size*2},-{sin(\range)*\oversize});
    \coordinate (top_right) at ({cos(\range)*\size*4},{sin(\range)*\oversize});
    \coordinate (bottom_right) at ({cos(\range)*\size*4},-{sin(\range)*\oversize});
    
    \draw[black,thick] (top_left) -- (bottom_left) -- (bottom_right) -- (top_right) -- cycle;


    \coordinate (b1) at (-{cos(\range)*\size},0);
    \coordinate (b2) at ({cos(\range)*\size},0);
    \coordinate (b3) at ({cos(\range)*\size*3},0);
    \coordinate (b4) at ({cos(\range)*\size},{sin(\range)*\oversize*3/4});
    
    \draw[thick] let    \p{1} = (b1)    in (a1 |- origin)
        node[circle,fill=black!80,inner sep=1pt,
        label={[align=center]below:
                $b_1$ \\ 
                (\pgfmathparse{(\x1/28.45274)}\num[round-mode=places,round-precision=1]{\pgfmathresult},
                \pgfmathparse{(\y1/28.45274)}\num[round-mode=places,round-precision=1]{\pgfmathresult})
                }] at (\x1,\y1) {};
                
    \draw[thick] let    \p{1} = (b2)    in (a1 |- origin)
        node[circle,fill=black!80,inner sep=1pt,
        label={[align=center]below:
                $b_2$ \\ 
                (\pgfmathparse{(\x1/28.45274)}\num[round-mode=places,round-precision=1]{\pgfmathresult},
                \pgfmathparse{(\y1/28.45274)}\num[round-mode=places,round-precision=1]{\pgfmathresult})
                }] at (\x1,\y1) {};    

    \draw[thick] let    \p{1} = (b3)    in (a1 |- origin)
        node[circle,fill=black!80,inner sep=1pt,
        label={[align=center]below:
                $b_3$ \\ 
                (\pgfmathparse{(\x1/28.45274)}\num[round-mode=places,round-precision=1]{\pgfmathresult},
                \pgfmathparse{(\y1/28.45274)}\num[round-mode=places,round-precision=1]{\pgfmathresult})
                }] at (\x1,\y1) {};    

    \draw[thick] let    \p{1} = (b4)    in (a1 |- origin)
        node[circle,fill=black!80,inner sep=1pt,
        label={[align=center]below:
                $b_4$ \\ 
                (\pgfmathparse{(\x1/28.45274)}\num[round-mode=places,round-precision=1]{\pgfmathresult},
                \pgfmathparse{(\y1/28.45274)}\num[round-mode=places,round-precision=1]{\pgfmathresult})
                }] at (\x1,\y1) {};
                

\end{tikzpicture}



        \caption{Example for big brother logic}
        \label{fig:big_brother_logic}
    \end{figure}

    In their scenario, agents are stationary cameras in a Euclidean plane $\mathbb{R}^2$, and they set the assumptions that those cameras can see anything in their sight range, and they do not have volume, which means they would not block others' sight. They extend \citeauthor{Fagin:2003:RK:995831}'s logic \cite{Fagin:2003:RK:995831} by noting that, at any point in time, what an agent knows, including nested knowledge, can be derived directly from what it can see in the current state. In brief, instead of Kripke frames, they define a \emph{geometric model} as $(pos,dir,ang)$, in which:
    
    \begin{itemize}
        \item $pos : Agt \rightarrow \mathbb{R}^2$
        \item $dir : Agt \rightarrow U$
        \item $ang : Agt \rightarrow [0, 2\pi)$
    \end{itemize}
    in which $U$ is the set of unit vectors of $\mathbb{R}^2$, the $pos$ function gives the position of each agent, the $dir$ function gives the direction that each agent is facing, and the function $ang$ gives the angle of view for each agent.

    A model is defined as $(pos, ang, D, R)$, in which $pos$ and $ang$ are as above, $D$ is the set of $dir$ functions and $R$ the set of equivlance relations, one for each agent $a$, defined as: 
    \[
     R_a = \{(dir, dir') \in D^2 \mid \textrm{ for all } b \neq a, dir(b) = dir'(b)\}
     \]
    In this context, standard propositional logic is extended with the binary operator $a \triangleright b$, which represents that ``$a$ sees $b$". This is defined as:
    \begin{itemize}
        \item $(pos,ang,D,R), dir \vDash a \triangleright b$ iff $pos(b) \in C_{pos(a),dir(a),ang(a)}$,
    \end{itemize}  
    in which $C_{pos(a),dir(a),ang(a)}$ is the field of vision that begins at $pos(a)$ from direction $dir(a)$ and goes $ang(a)$ degrees in a counter-clockwise direction.
    
    Figure~\ref{fig:big_brother_logic} shows an example with two agents, $a_1$ and $a_2$, and  model $((0.0,0.0),60^\circ,D,R)$ and $((5.2,0.0),60^\circ,D,R)$ respectively, along with four objects, $b_1$, $b_2$, $b_3$ and $b_4$. Based on the current state, for agent $a_1$, we have: 
    \newline $(pos,ang,D,R),dir \vDash a_1 \triangleright a_2$; $(pos,ang,D,R),dir \vDash a_1 \triangleright b_2$; and, $(pos,ang,D,R),dir \vDash a_1 \triangleright b_3$.
    
    From this, they show the relationship between seeing and knowing. For example, $K_a (b \triangleright c)$ is defined as $a \triangleright b \land  a \triangleright c \land b \triangleright c$. 
    
    \citeauthor{DBLP:conf/atal/GasquetGS14} \cite{DBLP:conf/atal/GasquetGS14} also define a common knowledge operator, defined in a similar manner to that of \citeauthor{Fagin:2003:RK:995831}'s definition based on $G$-reachable worlds \cite{Fagin:2003:RK:995831}. In Figure~\ref{fig:big_brother_logic}, the formula $C_{\{a_1,a_2\}loc(b_2)=(2.6,0.0)}$ holds because $a_1$ and $a_2$ can both see $b_2$, and can both see that each other sees $b_2$, etc.

    \citeauthor{DBLP:conf/ecai/CooperHMMR16}~\cite{DBLP:conf/ecai/CooperHMMR16} adopted \citeauthor{DBLP:conf/atal/GasquetGS14}~\cite{DBLP:conf/atal/GasquetGS14}'s idea of modelling an agent's knowledge based on what it sees, and generalised it to seeing propositions, rather than to just to seeing other agents in a Euclidean plane. They extended \citeauthor{Fagin:2003:RK:995831}'s definition by adding an extra formula $\alpha$, which represents formula about what can be seen:
    \[
    \begin{array}{lll}
    	 \alpha & ::= & p \mid S_i p \mid S_i\alpha\\
    	\varphi & ::= & \alpha \mid \varphi_1 \land \varphi_2 \mid \neg \varphi \mid K_i\varphi, 
    \end{array}
    \]
    in which $p \in PVar$ (the set of propositional \emph{variables}) and $i \in Agt$. The grammar for $\alpha$ defines how to represent visibility relations. $S_i\alpha$ can be read as ``agent $i$ sees $\alpha$". Note the syntactic restriction that agents can only see atomic propositions or nestings of seeing relationships that see atomic propositions. 
    
    From this, they define knowledge using the equivalences $K_i p \leftrightarrow p \land S_i p$ and $K_i \neg p \leftrightarrow \neg p \land S_i p$. This tight correspondence between seeing and knowledge is intuitive: an agent knows $p$ is true if $p$ is true and the agent can see the variable $p$. Such a relationship is the same as the relationship between knowing something is true and \emph{knowing whether} something is true \cite{DBLP:conf/aaai/MillerFMPS16,DBLP:journals/rsl/FanWD15}. In fact, in early drafts of the work available online, \citeauthor{DBLP:conf/ecai/CooperHMMR16}~\cite{DBLP:conf/ecai/CooperHMMR16}, $S_i$ was written $KW_i$ and was called the ``knowing whether'' operator.
    
    
    Comparing these two bodies of work,  \citet{DBLP:conf/atal/GasquetGS14} use a geometric model to represent the environment and derive knowledge from this by checking the agents' line of sight. Their idea is literally matching the phrase ``Seeing is believing". However, their logic is constrained only to vision in physical spaces. While in \citet{DBLP:conf/ecai/CooperHMMR16}'s world, the seeing operator applies to propositional variables, and thus visibility can be interpreted more abstractly; for example, `seeing' (hearing) a message over a telephone.
    
    The current paper generalises seeing relations to perspective functions, which are domain-dependent functions defining what agents see in particular states. The result is more flexible than seeing relations, and allows Big Brother Logic to be defined with a simple perspective function, as well as new perspective functions for other domains; for example, Big Brother Logic in three-dimensional planes, or visibility of messages on a social network.
    
    \subsection{Epistemic Planning}		
    
    \citet{DBLP:journals/jancl/BolanderA11} introduce the concept of epistemic planning, and define it in both single agent and multi-agent domain. Their planning framework is defined in \emph{dynamic epistemic logic} (DEL) \cite{van2007dynamic}, which has been shown to be undecidable in general, but with decidable fragements~\cite{DBLP:conf/ijcai/BolanderJS15}. In addition, they also provided  a solution to PSPACE-hardness of the plan verification problem. This formalism has been used to explore theoretical properties of epistemic planning; for example, \citeauthor{DBLP:journals/corr/EngesserBMN17}~\cite{DBLP:journals/corr/EngesserBMN17} used concepts of perspective shifting  to reason about other's contribution to  joint goals. Along with implicit coordination actions, their model can solve some problems elegantly without communication between agents.
    
    
    Since epistemic planning was formalized in DEL, there has been substantial work on DEL-based planning. However, in this paper, our focus is on the design, implementation, and evaluation of planning tools, rather than on logic-based models of planning. In this section, we focus on research in epistemic planning that is focused on the design, implementation, and evaluation of planning algorithms.
    
    A handful of researchers in the planning field focus on leveraging existing planners to solve epistemic problems.  \citeauthor{DBLP:conf/aaai/MuiseBFMMPS15}~\cite{DBLP:conf/aaai/MuiseBFMMPS15} proposed an approach to multi-agent epistemic planning  with nested belief, non-homogeneous agents, co-present observation, and the ability for one agent to reason as if it were the other. Generally, compiled an epistemic logic problem into a classical planning problem by grounding epistemic fluents into propositional fluents and using additional conditional effects of actions to enforce  desirable properties of belief.  They evaluated their approach on the \textsl{Corridor}~\cite{DBLP:conf/aips/KominisG15} problem and the \textsl{Grapevine} problem, which is a combination of \textsl{Corridor} and \textsl{Gossip}~\cite{DBLP:conf/eumas/HerzigM15}. Their results show that their approach is able to solve the planning task within a typically short time, but the compilation time to generate fluents and conditional effects is exponential in the size of the original problem. 
    
    Simultaneously, \citeauthor{DBLP:conf/aips/KominisG15}~\cite{DBLP:conf/aips/KominisG15} adopted methods from partially-observable planning for representing beliefs of a single agent, and converted that method to handle multi-agent setting. They define three kinds of actions sets $O$, $N$ and $U$. $O$ represents all physical actions, which is the same $O$ as in classical planning. The actions set $N$ denotes a set of sensing actions, which can be used to infer knowledge. The last action set $U$ is used to update any of the fact $\varphi$. They are able to use their model to encode epistemic planning problems, and can convert their model to a classical planning model using standard compilation techniques for partially-observable planning. They evaluated their model on \textsl{Muddy Children}, \textsl{Sum} and \textsl{Word Rooms}~\cite{DBLP:conf/aips/KominisG15} domains. Their results show that their model is able to handle solve all cases presented, however, the performance as measured by time and depth of the optimal plan is indirectly related to the problem scope and planning algorithm. For example, by using two different planners, $A^*(h_{max})$ and BFS($h_{add}$), the \textsl{Muddy Children} test case with seven children, the time consumed by BFS($h_{add}$) is only half of $A^*(h_{max})$, while for other problems, there is little difference between the two.
    
    Since \citeauthor{DBLP:conf/aips/KominisG15}, and \citeauthor{DBLP:conf/aaai/MuiseBFMMPS15}'s approach the problem in a similar way (compilation to classical planning) their results are similar in a general way. However, the methods they used are different, so, their work and results have diverse limitations and strengths. For \citeauthor{DBLP:conf/aaai/MuiseBFMMPS15}'s work, they managed to modeling nested beliefs without explicit or implicit Kripke structures, which means they can only represent literals, while \citeauthor{DBLP:conf/aips/KominisG15}'s work is able to handle arbitrary formulae. Furthermore, \citeauthor{DBLP:conf/aaai/MuiseBFMMPS15}'s model does not have the strict common initial knowledge setting found in \citeauthor{DBLP:conf/aips/KominisG15}, and does not have the constraint that all action effects are commonly known to all the agents. Therefore, \citeauthor{DBLP:conf/aaai/MuiseBFMMPS15}'s model allows them to model beliefs, which might be not what is actual true in the world state, rather just model knowledge. In other words, they can handle different agents having different belief about the same fluent. 
    
    More recently, rather than compiling epistemic planning problems into classic planning, \citeauthor{DBLP:conf/ijcai/HuangFW017} built a native multi-agent epistemic planner, and proposed a general representation framework for multi-agent epistemic problems~\cite{DBLP:conf/ijcai/HuangFW017}. They considered the whole multi-agent epistemic planning task from a third person point of view. In addition, based on a well-established concepts of believe change algorithms (both revision and update algorithms), they design and implemented an algorithm to encode belief change as the result of planning actions. Catering to their ideas and algorithms, they developed a planner called MEPK. They evaluated their approach with \textsl{Grapevine}, \textsl{Hexa Game} and \textsl{Gossip}, among others. From their results, it is clear that their approach can handle a variety of problems, and performance on some problems is better than other approaches. While this approach is different  from \citeauthor{DBLP:conf/aips/KominisG15} and \citeauthor{DBLP:conf/aaai/MuiseBFMMPS15}, it still requires a compilation phase before planning to re-write epistemic formula into a specific normal form called \emph{alternating cover disjunctive formulas} (ACDF) \cite{hales2012refinement}. The ACDF formula is worst-case exponentially longer than the original formula. The results show that this step is of similar computational burden as either \citeauthor{DBLP:conf/aips/KominisG15} or \citeauthor{DBLP:conf/aaai/MuiseBFMMPS15}. In addition, building a native planner to solve an epistemic planning problem makes it more difficult to take advantage of recent advances in other areas of planning.

\section{A Model of Epistemic Planning using Perspectives}
\label{sec:model}
In this section, we define a syntax and semantics of our \emph{agent perspective model}. Our idea is based on that of Big Brother Logic \cite{DBLP:conf/atal/GasquetGS14} and \citeauthor{DBLP:conf/ecai/CooperHMMR16}'s seeing operators \cite{DBLP:conf/ecai/CooperHMMR16}. The syntax and semantics of our model is introduced, including distributed knowledge and common knowledge. 

\subsection{Language}


Extending \citeauthor{DBLP:conf/ecai/CooperHMMR16}'s \cite{DBLP:conf/ecai/CooperHMMR16} idea of seeing propositional variables, our model is based on a model of functional STRIPS (F-STRIPS) \cite{DBLP:conf/ijcai/FrancesRLG17}, which uses variables and domains, rather than just propositions. We allow agents to see variables with discrete and continuous domains. 

\subsubsection{Model}\label{models}

We define an epistemic planning problem as a tuple $(Agt,V,D,O,I,G)$, in which $Agt$ is a set of agents, $V$ is a set of variables, $D$ stands for domains for each variable, in which domains can be discrete or continuous, $I$ and $G$ are the initial state and goal states respectively, and both of them are also bounded by $V$ and $D$. Specifically, they should be assignments for some or all variables in $V$, or relations over these. $O$ is the set of operators, with argument in the terms of variables from $V$. 

\subsubsection{Epistemic Formulae}
\label{grammar}
\begin{Definition}
Goals, actions preconditions, and conditions on conditional effects, are epistemic formulae, defined by the  following grammar:
\[
    \varphi ::= R(v_1,\dots,v_k) \mid \neg \varphi \mid \varphi \land \varphi \mid \neg \varphi \mid \varphi_1 \land \varphi_2 \mid S_i v \mid S_i \varphi \mid K_i \varphi,
\]
in which $R$ is $k$-arity `domain-dependent' relation formula, $v \in V$,  $S_i v$  and $S_i \varphi$ are both are visibility formulae, and $K_i \varphi$ is a knowledge formula.
\end{Definition}

\subsubsection{Domain Dependent Formulae}\label{domain_dependent_forumla}
Domain-dependent formula not only including basic mathematical relations, but also relational terms defined by the underlying planning language can have the relation between variables. For example, based on the scenario from Figure \ref{fig:big_brother_logic}, $loc(a_1)=(0,0)$ is a true formula expression the location of agent $a_1$, while $loc(a_1)=loc(a)$ is a false. In Section~\ref{sec:implementation}, we discuss the use of \emph{external functions} in F-STRIPS in our implemented planning, which allow
more complex relations, or even customized relations, as long as they have been defined in the external functions. For example, we can define a domain dependent relation in external function to compare distance between objects, called $@\mathit{far}\_\mathit{away}(loc(i),loc(j),loc(k))$. This external function takes three coordinates as input, and returns a Boolean value, whether distance between $i$ and $j$ is longer than $i$ and $k$. In the scenario displayed in Figure \ref{fig:big_brother_logic}, the relation $@\mathit{far}\_\mathit{away}(loc(a_1),loc(b_4),loc(b_2))$ would be true, while relation  $@\mathit{far}\_\mathit{away}(loc(a_1),loc(b_1),loc(b_2))$ would be false, since $b_1$ and $b_2$ are equally close to $a_1$. The definition of this function is delegated to a function implemented in a programming language such as C++, and the planner is unaware of its semantics. However, for the remainder of this section, we will ignore the existence of external functions, and return to them in our implementation section. 

\subsubsection{Visibility Formula}
An important concept adapted from \citet{DBLP:conf/ecai/CooperHMMR16} is ``seeing a proposition". Let $p$ be a proposition, ``agent $i$ knows whether $p$" can be represented as ``agent $i$ sees $p$". Their interpretation on this is: either $p$ is true and $i$ knows that; or, $p$ is false and $i$ knows that. With higher-order observation added, it gives us a way to reason about others' epistemic viewpoint about a proposition without actually knowing whether it is true. Building on this concept, our `seeing' operator allows us to write formulae about visibility: $S_i v$ and, $ S_i \varphi $.


The seeing formulae represent two related interpretations: seeing a variable; or seeing a formula. The formula $S_i v$ can be understood as variable $v$ has some value, and no matter what value it has, agent $i$ can see the variable and knows its value. On the other hand, seeing a relation is trickier. $S_v \varphi$ can be interpreted as: one relation $\psi $ is a formula and no matter whether  is true or false, $i$ knows whether it is true or not. To make sure $i$ knows whether $\varphi $ is true or not, the evaluation for this seeing formula is simply that agent $i$ sees the variables in that relation.

For example, in Figure \ref{fig:big_brother_logic}, using the notation defined in Section~\ref{models}, $S_{a_1} loc(b_1)$ can be read as ``agent $a_1$ sees variable $b_1$". In the case of seeing a domain-dependent formula, $S_{a_1} (\mathit{far}\_\mathit{away}, loc(a_1), loc(b_4), loc(b_2))$ can be read as ``agent $a_1$ sees whether the relation $\mathit{far}\_\mathit{away}(loc(a_1),loc(b_4),loc(b_2))$ is true or not", which is: ``agent $a_1$ sees whether $b_4$ is farther away from $a_1$ than $b_2$."

\subsubsection{Knowledge Formula}
In addition to the visibility operator, our language supports the standard knowing operator $K_i$.
Following novel idea from \citeauthor{DBLP:conf/ecai/CooperHMMR16}\cite{DBLP:conf/ecai/CooperHMMR16} on defining knowledge based on visibility \cite{DBLP:conf/ecai/CooperHMMR16}, we define knowledge as: $K_i\varphi \leftrightarrow \varphi \land S_i\varphi$. That is, for $i$ to know $\varphi$ is true, it needs to be able to see $\varphi$, and $\varphi$ needs to be true. In other words, if you can see something and it is true, then, you know it is true.

\subsection{Semantics}
\label{semantics}
We now formally define the semantics of our model.

\subsubsection{Knowledge Model}\label{subsubsec:knowledge-model}

Our model decomposes the planning model from the knowledge model, and as we will see in our implementation, our planner delegates the knowledge model to an external solver. Therefore, in this section, we define the semantics of that knowledge solver. The novel part of this model is the use of perspective functions, which are functions that define which variables an agent can see, instead of Kripke structures. From this, a rich knowledge model can be built up independent of the planning domain.

\begin{Definition}
A model $M$ is defined as $M=(V, D,\pi,\f_1,\dots,\f_n)$, in which $V$ is a set of variables, $D$ is a function mapping domains for each variable, $\pi$ is the evaluation function, which evaluates variables and formulae based on the given state, and $\f_1,\dots,\f_n : S \rightarrow S$ are the agents' \emph{perspective functions} that given a state $s$, will return the local state from agents' perspectives. 

A state $s$ is a tuple of variable assignments, denoted $[v_1=e_1, \dots, v_k=e_k]$, in which $v_i\in V$ and $e_i\in D(V)$ for each $i$. The \emph{global state} is a complete assignment for all variables in $V$. A \emph{local state}, which represents an individual agent's perspective of the global state, can be a partial assignment. The set of all states is denoted $S$.
\end{Definition}



A perspective function, $f_i : S \rightarrow S$ is a function that takes a state and returns a subset of that state, which represents the part of that state that is visible to agent $i$. These functions can be nested, such that $f_j(f_i(s))$ represents agent $i$'s perspective of agent $j$'s perspective, which can be just a subset of agent $j$'s actual perspective. The following properties must hold on $f_1,\ldots,\f_n$ for all $i \in Agt$ and $s \in S$:
\begin{enumerate}
 \item $f_i(s) \subseteq s$ 
 \item $f_i(s) = f_i(f_i(s))$
\end{enumerate}

First, we give the definition for propositional formulae. Let any variable be denoted as $v_i$, and $R$ any $k$-ary domain dependent formula:
\begin{itemize}
    \item $(M,s)\ \vDash\ R(v_1,\dots,v_k)$ iff $\pi(s)(R(\pi(s)(v_1),\dots,\pi(s)(v_k)))=true$
\end{itemize}

Relations are handled by evaluation function $\pi(s)$. The relation $R$ is evaluated by getting the value for each variable in $s$, and checking whether $R$ stands or not. Other propositional operators are defined in the standard way. 

Then, we  have the following formal semantics for visibility:
\begin{itemize}
    \item $(M,s)\ \vDash\ S_i\ v$ iff $\exists (v=x) \in \f_i(s)$ for any value $x$
    \item $(M,s)\ \vDash\ S_i\ R(v_1,\dots,v_k)$ iff $\forall v \in \{v_1,\dots,v_k\},\ (M,s)\ \vDash\ S_i v$
    \item $(M,s)\ \vDash\ S_i \neg \varphi$ iff $(M,s)\ \vDash S_i\ \varphi$
    \item $(M,s)\ \vDash\ S_i\ (\varphi\land\psi)$ iff $(M,s)\ \vDash S_i\ \varphi$ and $(M,s)\ \vDash S_i\ \psi$
    \item $(M,s)\ \vDash\ S_i\ S_j\ \varphi$ iff $(M,\f_i(s))\ \vDash\ S_j\ \varphi$
    \item $(M,s)\ \vDash\ S_i\ K_j\ \varphi$ iff $(M,\f_i(s))\ \vDash\ K_j\ \varphi$
\end{itemize}

$S_i v$, read ``Agent $i$ sees variable $v$", is true if and only if $v$ is visible in the state $\f_i(s)$. That is, an agent sees a variable if and only if that variable is in its perspective of the state. Similarly, an agent knows whether a domain-dependent formula is true or false if and only if it can see every variable of that formula. For example, in Figure~\ref{fig:big_brother_logic}, $S_{a_1} b_1$ is false and $S_{a_1} b_2$ is true, which is because $b_2$ is in $a_1$'s perspective (blue area), while $b_1$ is not. The remainder of the definitions simply deal with logical operations. However, note the definition of $S_i\neg\phi$, which is equivalent to $S_i \phi$, because all variables are the same. This effectively just defines that `seeing' a formula means seeing its variables.

Now, we define knowledge as:
\begin{itemize}
    \item $(M,s)\ \vDash\ K_{i}\ \varphi$ iff $(M,s)\ \vDash\ \varphi$ and $(M,s)\ \vDash\ S_i \varphi$
\end{itemize}

This definition follows  \citeauthor{DBLP:conf/ecai/CooperHMMR16}'s definition \cite{DBLP:conf/ecai/CooperHMMR16}: agent $i$ knows $\varphi$  if and only if the formula is true at $(M,s)$ and agent $i$ sees it. Using the same example as previously, $K_{a_1}@\mathit{far}\_\mathit{away}(loc(a_1),loc(b_2),loc(b_3))$ is false, while $K_{a_1} @\mathit{far}\_\mathit{away}(loc(a_1),loc(b_3),loc(b_2))$ is true.


\begin{theorem}
    The S5 axioms of epistemic logic hold in this language. That is, the following axioms hold: 
    
    \begin{tabular}{llll}
    (K) & $K_i (\varphi \rightarrow \psi)$ & $\rightarrow$ & $K_i \varphi \rightarrow K_i \psi$\\
    (T) & $K_i \varphi$ & $\rightarrow$ & $\varphi$\\
    (4) & $K_i \varphi$ & $\rightarrow$ & $K_i K_i \varphi$\\
    (5) & $\neg K_i \varphi$ & $\rightarrow$ & $K_i \neg K_i \varphi$
    \end{tabular}
\end{theorem}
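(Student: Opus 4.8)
The plan is to show that each of the four schemata is valid, i.e.\ holds at every pointed model $(M,s)$, by unfolding the defining equivalence $K_i\varphi \leftrightarrow \varphi \land S_i\varphi$ and exploiting the two structural properties of perspective functions, namely containment $f_i(s)\subseteq s$ and idempotence $f_i(s)=f_i(f_i(s))$. Before touching the axioms I would isolate two lemmas. The first (call it the \emph{support lemma}) formalises the remark following the semantics that ``seeing a formula means seeing its variables'': by induction on $\varphi$ one shows $(M,s)\vDash S_i\varphi$ iff agent $i$ sees in $s$ every variable occurring in $\varphi$; in particular seeing is monotone under passing to a subformula, so $(M,s)\vDash S_i(\varphi\to\psi)$ entails $(M,s)\vDash S_i\psi$. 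The second (the \emph{invariance lemma}) states that restricting to an agent's own perspective never changes the truth value of a formula all of whose variables that agent sees; containment gives value preservation, and idempotence specialises this to the fixed-point identity $(M,s)\vDash K_i\varphi$ iff $(M,f_i(s))\vDash K_i\varphi$, i.e.\ an agent's perspective faithfully records its own knowledge.

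With these in hand, (T) is immediate: by definition $(M,s)\vDash K_i\varphi$ already contains the conjunct $(M,s)\vDash\varphi$. For (K) I would argue the two conjuncts of $K_i\psi$ separately. The truth conjunct $(M,s)\vDash\psi$ is ordinary modus ponens from the truth conjuncts $(M,s)\vDash\varphi\to\psi$ and $(M,s)\vDash\varphi$ supplied by $K_i(\varphi\to\psi)$ and $K_i\varphi$. The visibility conjunct $(M,s)\vDash S_i\psi$ follows from the support lemma, since the variables of $\psi$ are among those of $\varphi\to\psi$ and $i$ sees the latter by $K_i(\varphi\to\psi)$.

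Axioms (4) and (5) are where the perspective machinery does the work. For (4), unfolding gives $K_iK_i\varphi \equiv K_i\varphi \land S_iK_i\varphi$; the clause $(M,s)\vDash S_iK_j\varphi$ iff $(M,f_i(s))\vDash K_j\varphi$ together with the invariance lemma (at $j=i$) collapses $S_iK_i\varphi$ to $K_i\varphi$ itself, so $K_iK_i\varphi\equiv K_i\varphi$ and the implication is trivial. For (5), assuming $(M,s)\vDash\neg K_i\varphi$ I must produce $K_i\neg K_i\varphi \equiv \neg K_i\varphi \land S_i\neg K_i\varphi$; the first conjunct is the hypothesis, and the second should follow because the idempotent perspective $f_i(s)$ determines the value of $K_i\varphi$ exactly---whether or not that value is ``true''---so the agent always sees its own knowledge status.

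I expect (5), negative introspection, to be the main obstacle, and precisely at the step that evaluates $S_i\neg K_i\varphi$. Naively chaining the negation clause $S_i\neg\chi\equiv S_i\chi$ down to the variable level would make $S_i\neg K_i\varphi$ track the \emph{truth} of $K_i\varphi$ rather than being unconditionally available, which would break the axiom; the correct reading---that an agent always has faithful access to whether it knows $\varphi$---must be justified from idempotence $f_i(f_i(s))=f_i(s)$, which guarantees that an agent's perspective of its own perspective is stable and hence that $K_i\varphi$ is fully determined inside $f_i(s)$. Getting this clause right, and stating the invariance lemma so that it applies uniformly to nested modal subformulae rather than only to relational ones, is the delicate part of the argument.
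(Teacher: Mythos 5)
Your proof follows the same overall route as the paper's: (T) is read off the defining equivalence, (K) comes from the fact that seeing a formula reduces to seeing its variables, and (4) and (5) rest on the idempotence property $f_i(f_i(s))=f_i(s)$. The differences are in organisation and care, and they are worth noting. For (K) the paper argues by cases on $\neg\varphi\lor\psi$, combining $(M,s)\vDash\neg\varphi$ with $S_i\varphi$ to obtain $\neg K_i\varphi$; your version (modus ponens for the truth conjunct, variable containment via the support lemma for the visibility conjunct) is cleaner and does the same work. More importantly, the obstacle you flag for (5) is exactly the point the paper's proof glosses over: the paper asserts that $(M,f_i(s))\vDash\psi$ yields $K_i\psi$ for $\psi=\neg K_i\varphi$ without saying how $S_i\neg K_i\varphi$ is to be evaluated. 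If one applies the printed clause $S_i\neg\chi\equiv S_i\chi$ with $\chi=K_i\varphi$, then $S_i\neg K_i\varphi$ reduces to $(M,f_i(s))\vDash K_i\varphi$, which by your invariance lemma is false under the hypothesis $\neg K_i\varphi$, and axiom (5) would fail outright. The axiom only goes through under the alternative reading you identify, in which $S_i$ applied to a modal subformula evaluates that subformula, negation included, inside $f_i(s)$; your invariance lemma (containment for value preservation, idempotence for the fixed point) then shows that $\neg K_i\varphi$ persists from $s$ into $f_i(s)$, which is what the paper's argument implicitly uses. So your proposal is correct and essentially the paper's proof, but it makes explicit a lemma and a semantic choice that the paper needs and does not state.
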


\begin{proof}
    We first consider axiom (T). By our semantics, $(M,s)\ \vDash\ K_{i}\ \varphi$ is true, if and only if, both $(M,s)\ \vDash\ \varphi$ and $(M,s)\ \vDash\ S_i \varphi$ are true. Therefore,  it is trivial that axiom (T) holds. 
    
    For (K), based on our definition of knowledge, we have $(M,s)\ \vDash\ K_i (\varphi \rightarrow \psi)$ is equivalent to $(M,s)\ \vDash\ S_i (\varphi \rightarrow \psi)$ and $(M,s)\ \vDash\ (\varphi \rightarrow \psi)$. Then, by our semantics, we have that $(M,s)\ \vDash\ S_i (\varphi \rightarrow \psi)$ is equivalent to $(M,s)\ \vDash\ S_i \neg \varphi$ or $(M,s)\ \vDash\ S_i \psi$. From propositional logic, $\varphi \implies \psi$ is equivalent to $\neg\varphi \lor \psi$. We combine $(M,s)\ \vDash\ \neg \varphi$ with $(M,s)\ \vDash\ S_i \varphi$ to get $(M,s)\ \vDash \neg K_i \varphi$ and similarly for $\psi$ to get $(M,s)\ \vDash K_i\psi$, which is equivalent to $(M,s)\ \vDash\ K_i\varphi \rightarrow K_i\psi$ from propositional logic.
    
    To prove (4) and (5), we use the properties of the perspective function $\f_i$. The second property shows, a perspective function for agent $i$ on state $s$ converges after the first nested iteration, which means $(M,\f_i(s)) \equiv (M,\f_i(\f_i(s)))$. Therefore, whenever $(M, \f_i(s)) \vDash \psi$, then $\psi$ also holds in $(M,\f_i(f_i(s)))$, implying that $K_i\psi$ holds too. This is the case when $\psi$ is $K_i\varphi$ or $\neg K_i\varphi$, so (4) and (5) hold.
\end{proof}

\subsection{Group Knowledge}

From the basic visibility and knowledge definitions, in this section, we define group operators, including distributed and common visibility/knowledge.

\subsubsection{Syntax}

We extend the syntax of our language with group operators:
\[
 \varphi ::= \psi \mid \neg \varphi_1 \mid \varphi_2 \land \varphi \mid ES_G \alpha \mid EK_G \varphi \mid DS_G \alpha \mid DK_G \varphi \mid CS_G \alpha \mid CK_G \varphi,
 \]
in which $G$ is a set of agents and $\alpha$ is a variable $v$ or formula $\varphi$.

 Group formula $ES_G \alpha$  is read as: everyone in group $G$ sees variable/formula $\alpha$, and $EK_G\varphi$ represents that everyone in group $G$ knows $\varphi$. $DK_G$ is the distributed knowledge operator, equivalent to $D_G$ in Section~\ref{sec:background:epistemic-logic}, while $DS_G$ is its visibility counterpart: someone in group $G$ sees. Finally, $CK_G$ is common knowledge and $CS_G$ common visibility: ``it is commonly seen''.

\subsubsection{Semantics}


Let $G$ be a set of agents, $\varphi$ a formula, and $\alpha$ either a formula or a variable, then we can define the semantics of these group formula as follows:
\begin{itemize}
    \item $(M,s) \vDash ES_G\ \alpha$ iff $\forall i\in G$, $(M,s) \vDash S_i\ \alpha$
    \item $(M,s) \vDash EK_G\ \varphi$ iff $\forall i\in G$, $(M,s) \vDash K_i\ \varphi$
    \item $(M,s) \vDash DS_G\ \alpha$ iff $(M,s') \vDash \alpha$, where $s'=\bigcup\limits_{i\in G}\f_i(s)$
    \item $(M,s) \vDash DK_G\ \varphi$ iff $(M,s) \vDash \varphi$ and $(M,s) \vDash DS_G\ \varphi$
     \item $(M,s) \vDash CS_G\ \alpha$ iff $(M,s') \vDash \alpha$, where $s'=\f \cc(G,s)$
    \item $(M,s) \vDash CK_G\ \varphi$ iff $(M,s) \vDash \varphi$ and $(M,s) \vDash CS_G\ \varphi$,
\end{itemize}
\noindent 
in which $\f\cc(G,s)$ is state reached by applying composite function $\bigcap \limits_{i \in G} \f_i$ until it reaches its fixed point. That is, the fixed point $s'$ such that $\f\cc(G,s')=\f \cc(G,\bigcap \limits_{i \in G} f_i(s'))$.

Reasoning about common knowledge and visibility is more complex than other modalities. Common knowledge among a group is not only everyone in the group shares this knowledge, but also everyone knows other knows this knowledge, and so on, \emph{ad infinitum}. The infinite nature of this definition leads to some definitions that are untractable in some models. 

However, due to our restriction on the definition of states as variable assignments and our use of perspective functions, common knowledge is much simpler. Our intuition is based on the fact that each time we apply composite perspective function $\bigcap \limits_{i \in G} \f_i(s)$, the resulting state is either a proper subset of $s$  (smaller) or is $s$. By this intuition, we can limit common formula in finite steps.

The fixed point is a recursive definition. However, the following theorem shows that this fixed point always exists (even if it is empty), and the number of iterations is bound by the size of $|s|$, the state to which it is applied.


\begin{theorem}
    Function $\f \cc(G,s)$ converges on a fixed point $s'=\f\cc(G,s')$ within $|s|$ iterations. 
\end{theorem}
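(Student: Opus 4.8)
The plan is to treat $\f\cc(G,\cdot)$ as the iteration of a single \emph{deflationary} operator and to run a size-based descent argument. First I would introduce the one-step map $g(s) = \bigcap_{i \in G} \f_i(s)$, so that $\f\cc(G,s)$ is by definition the result of iterating $g$ starting from $s$, and a fixed point is any state $s'$ with $g(s') = s'$.

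The key lemma I would establish is that $g$ is \emph{deflationary}: $g(s) \subseteq s$ for every state $s$. This is immediate from property~1 of the perspective functions, since for any $i \in G$ we have $\bigcap_{j \in G} \f_j(s) \subseteq \f_i(s) \subseteq s$. Iterating this, I obtain a non-increasing chain $s = s_0 \supseteq s_1 \supseteq s_2 \supseteq \cdots$ where $s_{k+1} = g(s_k)$, since $g(s_k) \subseteq s_k$ at every step.

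Next I would run the descent argument on the sizes $|s_k|$, the number of variable assignments in $s_k$. Because a state is a finite tuple of assignments, $|s_0| = |s|$ is a finite non-negative integer, and since $s_{k+1} \subseteq s_k$ the sequence $|s_0| \geq |s_1| \geq \cdots$ is non-increasing in $\mathbb{N}$. The crucial observation is that a \emph{strict} drop is forced at every non-fixed step: if $s_{k+1} \neq s_k$ then, because $s_{k+1} \subseteq s_k$, the inclusion is proper and $|s_{k+1}| \leq |s_k| - 1$. Since the size starts at $|s|$ and cannot fall below $0$, at most $|s|$ strict drops can occur, so some $s_k$ with $k \leq |s|$ satisfies $s_{k+1} = s_k = g(s_k)$; that is, a fixed point is reached within $|s|$ iterations. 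In the extremal case the chain strictly decreases all the way to the empty state, which is trivially fixed since $g(\emptyset) \subseteq \emptyset$ forces $g(\emptyset) = \emptyset$.

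The only delicate points I would flag explicitly are finiteness and the strict-decrease step. I need $|s|$ to count assignments and hence to be finite even when the domains $D$ are continuous: finiteness comes from $V$ being finite, not from the domains. And I need proper set inclusion to guarantee a decrease of at least one in size, which is exactly where the argument uses that states are sets of assignments rather than arbitrary objects. Note that property~2 (idempotence of each $\f_i$) is not required here; deflationarity (property~1) alone drives the convergence, and the iterations beyond the first are precisely what the intersection over several agents may genuinely need.
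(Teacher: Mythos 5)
Your proposal is correct and follows essentially the same route as the paper's proof: the deflationary one-step intersection map, the non-increasing chain of states, and the at-least-one-assignment strict drop at every non-fixed step, giving at most $|s|$ iterations before reaching a (possibly empty) fixed point. Your added remarks — that finiteness rests on $V$ being finite rather than on the domains, and that idempotence of the $\f_i$ plays no role here — are accurate refinements of the same argument rather than a different one.
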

\begin{proof}
First, we prove convergence is finite; that is, when $s'=\f\cc(G,s')$, further `iterations' will result in $s'$; that is, $s'=\f\cc(G,\f\cc(G, s'))$.    Let $s_i=\f\cc(G,s_i)$, where $i$ is the number of iterations. Then, we have $s_{i+1}=\f\cc(G,s_i)=s_i$. Since $s_{i+1}=s_i$, we have $s_{i+2}=\f\cc(G,s_i)$, which means $s_{i+2}=s_i$. Via induction, we have that for all $k \geq i$, $s_k = s_i$. Therefore, once we reach convergence, it remains.

Next, we prove convergence within $|s|$ iterations.
By the intuition and definition of the perspective functions, $\f_k(s)\subseteq s$, and $s_{i+1}=(\bigcap \limits_{k \in G} f_k(s_i))$, we have $s_{i+1}\subseteq s_i$. Then, as we prove for the first point, if $s_i=s_{i+1}$, then we have reached a fixed point and no further iterations are necessary. Therefore, the worst case is when $s_{i+1} \subsetneq s_i$ and $|s_i| - |s_{i+1}| = 1$. There are most $|s|$ such worst case iterations until $s_i$  converges on an empty set. Therefore, the maximum  number of iterations is $|s|$.
\end{proof}

For each of the iteration, there are $|G|$ local states in group $G$ that need to be applied in the generalised union calculation, which can be done in polynomial time, and there there are at most $|s|$ steps. So, a poly-time algorithm for function $\f\cc$ exists.

\subsection{A brief note on expressiveness}


The intuitive idea about perspective functions is based on what agents can see, as determined by the current state. The relation between $t=\f_i(s)$ and $s$ corresponds roughly to accessibly relations $(s,t) \in \mathcal{K}_i$ in Kripke semantics. However, only focusing on what agent exactly knows/sees means overlooking those variables that agents are uncertain about. Perspective functions return one partial world that the agent $i$ is certain about, rather than a set of worlds that $i$ considers possible. The advantage is that applying a perspective function provides us only one state, rather than multiple states in Kripke semantics, preventing explosions in model size.

However, the reduced complexity loses information on the ``unsure" variables. Theoretically, $t=\f_i(s)$ from our model is a set intersection from all the $t$ that $(s,t)\in \mathcal{K}_i$. This eliminates disjunctive knowledge about variables; the only uncertainty being that an agent does not see a variable. For example, in the well-known \textit{Muddy Child} problem, the knowledge is not only generated by what each child can see by the others' appearance, which is modelled straightforwardly using perspective functions, but also can be derived from the statements made by their father and the response by other children. From their perspective, they would know exactly $m$ children are dirty, which can be handled by our model, as they are certain about it. While by the $k$-th times the father asked and no one responds, they can use induction and get the knowledge that at least $k$ children are dirty. By considering that there are two  possible worlds, where the number of dirty children is $m$ or $m+1$, Kripke structures keep both possible worlds until $m+1$ steps. Our model cannot keep this.

Therefore, although our model can handle preconditions and goals with disjunction, such as $K_i\ (v = e_1) \lor (v = e_2)]$, it cannot \emph{store} such disjunction in its `knowledge base'. Rather, agent $i$ knows $v$'s value is $e_1$ or its knows it is $e_2$. 

Despite this, possible worlds can be \emph{represented} using our model, using the same approach taken by \citet{DBLP:conf/aips/KominisG15} of representing Kripke structures in the planning language. We can then define perspective functions that implement Kripke semantics.

To summarise Section~\ref{sec:model}, we have defined a new model of epistemic logic, including group operators, in which states in the model are constrained to be just variable assignments. The complexity of common knowledge in this logic is bound by the size of the state and number of agents. In the following section, we show how to implement this in any F-STRIPS planner that supports external functions.

\section{Implementation}
\label{sec:implementation}

To validate our model and test its capabilities, we encode it within a planner and solve on some well-known epistemic planning benchmarks. Two key aspects in planning problem solving is the planning language and solver. Since, as mentioned in section \ref{sec:background:classical-planning}, we use BFWS($f_5$)\cite{DBLP:conf/ijcai/FrancesRLG17}, the advantage of F-STRIPS with external functions allows us to decompose the planning task from the epistemic logic reasoning. In this section, we encode our model into the F-STRIPS planning language and explain our use  of external functions for supporting this.

\subsection{F-STRIPS encoding}
Any classical F-STRIPS \cite{DBLP:conf/ijcai/FrancesRLG17} problem can be represented by a tuple $(V,D,O,I,G,\mathbb{F})$, where $V$ and $D$ are variables and domains, $O$, $I$, $G$ are operators, initial state, and goal states. $\mathbb{F}$ stands for the external functions, which allow the planner to handle problems that cannot be defined as a classical planning task, such as, the effects are too complex to be modelled by propositional fluents, or even the actions and effects has some unrevealed corresponding relations. In our implementation, external functions are implemented in C++, allowing any C++ function to be executed during planning.

In our epistemic planning model $(Agt,V,D,O,I,G)$ defined in Section~\ref{sec:model}, $Agt$ is a set of agent identifiers, and $V$ and $D$ are exactly same as in F-STRIPS. Operators $O$, initial states $I$ and  goal states $G$ differ to F-STRIPS only in that they contain epistemic formulae in preconditions, conditions, and goals.

\subsubsection{Epistemic Formula in Planning Actions}

There are two major ways to include epistemic formula in planning, using the formula as preconditions and conditions (on conditional effects) in operators, or using as  epistemic  goals.

Setting desirable epistemic formulae as goals is straightforward. For example, in Figure~\ref{fig:big_brother_logic}, if we want agent $a_1$ to know $a_2$ sees $b_1$, we could simply set the goal be $K_{a_1} S_{a_2} b_1$. However, there are some other scenarios that cannot be modeled by epistemic goals, including temporal constraints such as ``agent $a_1$ sees $b_2$ all the time" or ``target $b_4$ needs to secretly move to the other side without been seeing by any agents". Each of those temporal epistemic constraints can be handled by  a Boolean variable and getting the external functions to determine whether the constraint holds at each state of the plan. If the external function returns false, we `fail' that plan. 



In addition, epistemic formula can be in the precondition of the operators directly. For example, in Figure~\ref{fig:big_brother_logic}, if the scenario is continued surveillance of $b_2$ over the entire plan, then the operator {\turn}($a_1,d$) would have that either $S_{a_1} b_2 \lor S_{a_2} b_2$ as the precondition, after $a_1$ turns $d$ degree as one of the preconditions. 

As for the encoding $O$, $I$ and $G$, besides the classical F-STRIPS planning parts, all the formulae are encoded as calls to external functions.

\subsection{External Functions}\label{subsec:external-functions}

External functions take variables as input, and return the evaluated result based on the current search state. It is the key aspect that allows us to divide epistemic reasoning from search. Therefore, unlike compilation approaches to epistemic planning that compile new formula or normal forms that may never be required, our epistemic reasoning using lazy evaluation, which we hypothesise can significantly reduce the time complexity for most epistemic planning problems.

\subsubsection{Agent perspective functions}


As briefly mentioned in Section~\ref{subsubsec:knowledge-model}, the perspective function, $f_i : S \rightarrow S$, is a function that takes a state and returns the local state from the perspective of agent $i$. Compared to Kripke structures, the intuition is to only define which variables an agent sees. Individual and group knowledge all derive from this. 

Once we have domain-specific perspective functions for each agent, or just one implementation for homogenous agents, our framework implementation takes care of the remaining parts of epistemic logic.
Given a domain-specific perspective function, we have implemented a library of external functions that implement the semantics of $K_i$, $DS_i$, $DK_i$, $CS_i$, and $CK_i$, using the underlying domain-specific perspective functions.  The modeller simply needs to provide the perspective function for their domain, if a suitable one is not already present in our library.
 

For example, in Figure~\ref{fig:big_brother_logic}, global state $s$ covers the whole flat field. $\f_{a_1}(s)$ is the blue area, and $\f_{a_2}(s)$ is the yellow area, which means in agent $a_1$'s perspective, the world is the blue part, and for agent $a_2$, the world is only the yellow part. Further more, in agent $a_1$'s perspective, what agent $a_2$ sees, can be represented by the intersection between those two coloured areas, which is actually $\f_{a_2}(f_{a_1}(s))$. The interpretation is that, agent $a_1$ only considers state $l=\f_{a_1}(s)$ is the ``global state" for him, and inside that state, agent $a_2$'s perspective is $\f_{a_2}(l)$.

To be more specific about perspective functions, assume the global state $s$ contains all variables for \{$a_1, a_2, b_1, b_2,$ $ b_3, b_4$\}, such as locations, the directions agents are facing, etc. Based on the current set up from  Figure~\ref{fig:big_brother_logic}, we can implement $f_i$ for any agent $i$ with the following Euclidean geometric calculation:
\begin{equation}
     [|\arctan (\frac{|y(a_1)-y(a_2)|}{x(a_1)-x(a_2)}) - dir(a_1)| \leq \frac{range(a_1)}{2}] \bigoplus [|\arctan (\frac{|y(a_1)-y(a_2)|}{x(a_1)-x(a_2)}) - dir(a_1)| \geq \frac{360-range(a_1)}{2}]
\end{equation}





The perspective function takes all agents' locations, directions and vision angles, along with target location as input, and returns those agents and objects who fall instead these regions. On each variable for both agents, $\f_{a_1}(s)=\{a_1,a_2,b_2,b_3\}$ and $\f_{a_2}(s)=\{a_1,a_2,b_1,b_2\}$. Our library can then reason about knowledge operator. 

While such an approach could be directly encoded using proposition in  classical planning, we assert that the resulting encoding would be tediously difficult and error prone.

Their distributed knowledge would be derived from the union over their perspectives, which is the whole world $s$ except all variables for $b_4$. Both agents  would know every variables in the intersection of their perspectives, \{$a_1,a_2,b_2$\}, while it is also the same as what they commonly knows. 

However, if we alter the scenario a bit by turning $a_1$  $180^\circ$, then the $EK$ would work similar as one level perspective functions, while $CK$ is empty. In the new scenario, $\f_{a_1}(s)$ would become \{$a_1,b_1$\}, and $EK$ becomes \{$a_1,b_1$\}. When finding the converged perspective for $a_1$ and $a_2$, $\f_{a_1}(\{a_1,b_1\})$ stays the same, but the perspective for $a_2$ results in an empty world. The intersection over their perspective does not have $a_2$, which means $a_2$ does not exist in at least one of the agent's ($a_1$) perspectives from the group.

However, the above implementation of the perspective function is only useful for Euclidean planes. One of the novelties in this paper is that we define our implementation to support arbitrary perspective functions, which can be provided by the modeller as new external functions. Therefore, an agent's perspective function would be able to handle any problem with a proper set of visibility rules to be defined. Basically, in our implementation, a perspective function can take any state variable from the domain model, and converts it into its the agent's perspective state. Then, following the property that $\f_i(s)\subseteq s$, the function applies the domain-specific visibility rules on all the variables to gain the agent's local state. From there, the semantics outlined in Section~\ref{sec:model} are handled by our library.

Therefore, we can simply define our external function on different epistemic problem by providing different implementations of $f_i$. For example, the only difference between the external function's implementation on Corridor and Grapevine benchmarks from Section~\ref{sec:experiments}, is that in Corridor, the visibiility rules are that an agent can see within the current room and the adjacent room, while in Grapevine, the agent can only see within the current room. Moreover, in order to implement epistemic planning problems for e.g.\ a thtee-dimensional Euclidean plane, we need to modify the perspective functions based on the geometric model.

From a practical perspective, this means that modellers provide: (1) a planning model that uses epistemic formula; and (2) domain-specific implementations for $\f_1,\ldots,\f_n$. Linking the epistemic formula in the planning model to the perspective functions is delegated to our library.

\subsubsection{Domain dependent functions}
Domain dependent functions are customised relations for each set of problems, corresponding to $R(v_1,\dots,v_k)$ in Section~\ref{sec:model}, and can be  any domain specific functions that are implementable as external functions.

Overall, the implementation of our model is not as direct as other classical F-STRIPS planning problems or epistemic planning problems. However, as we demonstrate in the next section, it is scalable and flexible enough to find valid solutions to many epistemic planning problems.

\section{Experiments \& Results} \label{sec:experiments}
In this section, we evaluate our approach on four problems: \textit{Corridor} \cite{DBLP:conf/aips/KominisG15}, \textit{Grapevine} \cite{DBLP:conf/aaai/MuiseBFMMPS15}, \textit{Big Brother Logic ({BBL})} \cite{DBLP:conf/atal/GasquetGS14}, and \textit{Social-media Network} (\textit{SN}). Corridor and Grapevine are well-known epistemic planning problems, which we use to compare actual performance of our model against an existing state-of-the-art planner. BBL is a model of the Big Brother Logic in a two-dimensional continuous domain, which we used to demonstrate the expressiveness of our model. In addition, to demonstrate our model's capability of reasoning about group knowledge, we inherit the classical Gossip Problem, and create our own version, called Social-media Network. Hereafter, we assume any knowledge formula $K_a v$ is supplied with correct value $e$, which means its equivalent with $K_a v=e$, unless the value is specified. 


\subsection{Benchmark problems}

In this section, we briefly describe the corridor and grapevine problems, which are benchmark problems that we use to compare against \citeauthor{DBLP:conf/aaai/MuiseBFMMPS15}'s epistemic planner \cite{DBLP:conf/aaai/MuiseBFMMPS15}, which compiles to classical planning. 

\paragraph{Corridor}
The corridor problem was originally presented by \citeauthor{DBLP:conf/aips/KominisG15}\cite{DBLP:conf/aips/KominisG15}. It is about selective communication among agents. The basic set up is in a corridor of rooms, in which there are several agents. An agent is able to move around adjacent rooms, sense the secret in a room, and share the secret. The rule of communication is that when an agent shares the secret, all the agents in the same room or adjacent rooms would know.

The goals in this domain are to have some agents knowing the secret and other agents not knowing the secret. Thus, the main agent needs to get to the right room and communication to avoid the secret being overheard.

\paragraph{Grapevine}
Grapevine, proposed by \citeauthor{DBLP:conf/aaai/MuiseBFMMPS15}\cite{DBLP:conf/aaai/MuiseBFMMPS15}, is a similar problem to Corridor. With only two rooms available for agents, the scenario makes sharing secrets while hiding from others more difficult. The basic set up is each agent has their own secret, and they can share their secret among everyone in the same room. Since there are only two rooms, the secret is only shared within the room. The basic actions for agents are moving between rooms and sharing his secret. 

To evaluate computational performance of our model, we compare to \citeauthor{DBLP:conf/aaai/MuiseBFMMPS15}\cite{DBLP:conf/aaai/MuiseBFMMPS15}'s $PDKB$ planner. They have several test cases for Corridor and Grapevine. In addition, to test how the performance is influenced by the problem, we created new problems that varied some of the parameters, such as the number of agents, the number of goal conditions and also depth of epistemic relations. 

The $PDKB$ planner converts epistemic planning problems into classical planning, which results in a significant number of propositions when the depth or number of agents increased. We tried to submit the converted classical planning problem to the same planner that our model used, $BFWS(R)$ planner, to maintain a fair comparison. However, since the computational cost of the novelty check in $BFWS(R)$ planner increases with size of propositions, the planning costs was prohibitively expensive. Therefore, for comparison, we use the default \textit{ff} planner that is used by \citeauthor{DBLP:conf/aaai/MuiseBFMMPS15}.

We ran the problems  on both planners using a Linux machine with 1 CPU and 10 gigabyte memory. We measure the number of atoms (fluents) and number of  nodes generated during the search to compare the size of same problem modelled by different methods. We also measured the total time for both planners to solve the problem, and the time they taken to reasoning about the epistemic relations, which correspond to  the time taken to call external functions for our solution (during planning), and the time taken to convert the epistemic planning problem into classical planning problem in the $PDKB$ solution (before planning).




\begin{table}[h]
    \centering
    \begin{tabular}{ccccccccccccc}
         \toprule
        
         \multirow{3}{*}{Problem}
          & \multicolumn{3}{c}{Parameters} & \multicolumn{5}{c}{Our Model} & \multicolumn{4}{c}{PDKB} \\ 
           \cmidrule(lr){2-4} \cmidrule(lr){5-9} \cmidrule(lr){10-13}
          
          & \multirow{2}{*}{$|a|$} & \multirow{2}{*}{$d$} & \multirow{2}{*}{$|g|$} & \multirow{2}{*}{$|atom|$} & \multirow{2}{*}{$|gen|$} & \multirow{2}{*}{$|calls|$} & \multicolumn{2}{c}{TIME(s)} & \multirow{2}{*}{$|atom|$} & \multirow{2}{*}{$|gen|$}&\multicolumn{2}{c}{TIME(s)}
          \\ & & & & & & & {Calls} & Total &&& Compile & Total \\
         \midrule
         Corridor 
            & $3$ & $1$ & $2$ & $13$ & $15$ & $48$ & $0.001$ & $0.004$ & $54$ & $21$ & $0.148$& $0.180$\\
            & $7$ & $1$ & $2$ & $13$ & $15$ & $48$ & $0.002$ &  $0.005$ & $70$ & $21$ & $0.186$ & $0.195$\\
            & $3$ & $3$ & $2$ & $13$ & $15$ & $48$ & $0.003$ &  $0.007$& $558$ & $21$ & $0.635$ & $0.693$\\
            & $6$ & $3$ & $2$ & $13$ & $15$ & $48$ & $0.005$ &  $0.008$& $3810$ & $21$ & $5.732$ & $6.324$\\
            & $7$ & $3$ & $2$ & $13$ & $15$ & $48$ & $0.005$ &  $0.008$& $5950$ & $21$ & $9.990$ & $11.13$\\
            & $8$ & $3$ & $2$ & $13$ & $15$ & $48$ & $0.006$ &  $0.009$& $8778$ & $21$ & $14.14$ & $15.68$\\
            & $3$ & $4$ & $2$ & $13$ & $15$ & $48$ & $0.006$ &  $0.009$& $3150$ & $21$ & $3.354$ & $3.752$\\
            & $3$ & $5$ & $2$ & $13$ & $15$ & $48$ & $0.006$ &  $0.009$& $18702$ & $21$ & $25.69$ & $29.54$\\[2mm]
                  
         Grapevine 
            & $4$ & $1$ & $2$ & $346$ & $10$ & $48$ & $0.002$ & $0.006$& $96$ & $22$ & $0.429$ & $0.469$ \\
            & $4$ & $2$ & $2$ & $346$ & $10$ & $48$ & $0.002$ & $0.007$& $608$ & $5$ & $2.845$ & $3.168$ \\
            & $4$ & $1$ & $4$ & $346$ & $23$ & $144$ & $0.005$ & $0.009$& $96$ & $11$ & $0.428$ & $0.468$ \\
            & $4$ & $2$ & $4$ & $346$ & $23$ & $144$ & $0.006$ & $0.010$& $608$ & $11$ & $2.885$ & $3.178$ \\
            & $4$ & $1$ & $8$ & $346$ & $368$ & $1200$ & $0.040$ & $0.047$ & $96$ & $529$ & $0.381$ & $0.455$ \\
            & $4$ & $2$ & $8$ & $346$ & $368$ & $1200$ & $0.050$ & $0.057$ & $608$ & $1234$ & $3.450$ & $4.409$ \\
            & $4$ & $3$ & $8$ & $346$ & $368$ & $1200$ & $0.068$ & $0.073$ & $4704$ & $14$ & $28.66$ & $30.72$ \\
            & $8$ & $1$ & $2$ & $546$ & $18$ & $48$ & $0.003$ & $0.010$ & $312$ & $5$ & $3.025$ & $3.321$ \\
            & $8$ & $2$ & $2$ & $546$ & $18$ & $48$ & $0.003$  & $0.010$& $4408$ & $5$ & $54.35$ & $58.80$ \\
            & $8$ & $1$ & $4$ & $546$ & $43$ & $144$ & $0.008$  & $0.016$ & $312$ & $11$ & $2.546$ & $2.840$ \\
            & $8$ & $2$ & $4$ & $546$ & $43$ & $144$ & $0.008$  & $0.016$ & $4408$ & $11$ & $55.33$ & $59.78$ \\
            & $8$ & $1$ & $8$ & $546$ & $1854$ & $4528$ & $0.238$  & $0.268$& $312$ & $2002$ & $2.519$ & $3.752$\\
            & $8$ & $2$ & $8$ & $546$ & $1854$ & $4528$ & $0.322$  & $0.294$ & $4408$ & $4371$ & $54.90$ & $228.1$\\
            & $8$ & $3$ & $8$ & $546$ & $1854$ & $4528$ & $0.394$  & $0.421$ & $-$ & $-$ & $-$ & $-$\\


        \bottomrule
    \end{tabular}
    \caption{Results for the Corridor and Grapevine Problems}
    \label{tab:computational}
\end{table}
We show the results of the problems in Table~\ref{tab:computational}, in which $|a|$  specifies the number of agents, $d$ the maximum depth of a nested epistemic query, $|g|$ the number of goals, $|atom|$ the number of atomic fluents, $|gen|$ the number of generated nodes in the search, and $|calls|$ the number of calls made to external functions.

From the results, it is clear that the complexity of the $PDKB$ approach grows exponentially on both number of the agents and depth of epistemic relations (we ran out of memory in the final Grapevine problem), while in our approach,  those features do not have a large affect. However, epistemic reasoning in our approach (calls to the external solver), has a significant influence on the performance for our solution. Since the F-STRIPS planner we use checks each goal at each node in the search, the complexity is in $O(|g|*|gen|)$. While this is exponential in the size of the original problem (because $|gen|$ is exponential), the compuational cost is significantly lower than the compilation in the $PDKB$ approach.

\subsection{Big Brother Logic}

Big Brother Logic (BBL) is a problem that first discussed by \citeauthor{DBLP:conf/atal/GasquetGS14} \cite{DBLP:conf/atal/GasquetGS14}. The basic environment is on a two-dimensional space called ``Flatland'' without any obstacles. There are several stationary and transparent cameras; that is,  cameras can only rotate, and do not have volume, so they do not block others' vision. In our scenario, we allow cameras to also move in Flatland.

\subsubsection{Examples}
Let $a_1$ and $a_2$ be two cameras in Flatland. Camera $a_1$ is located at $(5,5)$, and camera $a_2$ at $(15,15)$. Both cameras have an $90^{\circ}$ range. Camera $a_1$ is facing north-east, while camera $a_2$ is facing south-west. There are three objects with values $o_1=1$, $o_2=2$ and $o_3=3$, located at $(1,1)$, $(10,10)$ and $(19,19)$ respectively. For simplicity, we assume only camera $a_1$ can move or turn freely, and camera $a_2$, $o_1$, $o_2$ and $o_3$ are fixed. Figure \ref{fig:big_brother_exp} visualises the problem set up.

\begin{figure}[!ht]
    \centering
    \newcommand{\scale}{0.4}
\newcommand{\range}{30}
\newcommand{\size}{3}
\newcommand{\oversize}{6}
\makeatletter
\newcommand{\gettikzxy}[3]{%
  \tikz@scan@one@point\pgfutil@firstofone#1\relax
  \edef#2{\the\pgf@x}%
  \edef#3{\the\pgf@y}%
}
\makeatother

\begin{tikzpicture}
    \coordinate (top_left) at (0,20*\scale);
    \coordinate (bottom_left) at (0,0);
    \coordinate (top_right) at (20*\scale,20*\scale);
    \coordinate (bottom_right) at (20*\scale,0);

    \coordinate (origin) at (0,0);
    \coordinate (a1) at (5*\scale,5*\scale);
    \coordinate (a1_up) at (5*\scale,20*\scale);
    \coordinate (a1_down) at (20*\scale,5*\scale);
    \coordinate (a1_dir) at (6*\scale,6*\scale);
    \coordinate (a2) at (15*\scale,15*\scale);
    \coordinate (a2_up) at (0,15*\scale);
    \coordinate (a2_down) at (15*\scale,0);
    \coordinate (a2_dir) at (14*\scale,14*\scale);
        
    \begin{scope}[transparency group]
        \begin{scope}[blend mode=multiply]
            \fill[ opacity=0.5,blue!30] (a1) -- (a1_up) -- (top_right) -- (a1_down) -- cycle;
            \fill[ opacity=0.5,yellow!30] (a2) -- (a2_up) -- (bottom_left) -- (a2_down) -- cycle;
        \end{scope}
    \end{scope}
    
    \draw[thick, black,->] (a1) -- (a1_dir);
    \draw[thick, black,->] (a2) -- (a2_dir);

    \draw[thick] let    \p{1} = (a1)    in (a1 |- origin)
        node[circle,fill=red!80,inner sep=2pt,
        label={[align=center]below:
                $a_1$ \\ 
                (5,5)
                }] at (\x1,\y1) {};
    \draw[->, black] (a1) -- (a1_up);
    \draw[->, black] (a1) -- (a1_down);

    \draw[thick] let    \p{1} = (a2)    in (a1 |- origin)
        node[circle,fill=red!80,inner sep=2pt,
        label={[align=center, xshift=0.5cm]below:
                $a_2$ \\ 
                (15,15)
                }] at (\x1,\y1) {};
    \draw[->, black] (a2) -- (a2_up);
    \draw[->, black] (a2) -- (a2_down);


    \draw[black] (top_left) -- (bottom_left) -- (bottom_right) -- (top_right) -- cycle;

    \coordinate (b1) at (10*\scale,10*\scale);
    \coordinate (b2) at (1*\scale,1*\scale);
    \coordinate (b3) at (19*\scale,19*\scale);
    
    \draw[thick] let    \p{1} = (b1)    in (a1 |- origin)
        node[circle,fill=black!80,inner sep=1pt,
        label={[align=center]below:
                $o_2$ \\ 
                (10,10)
                }] at (\x1,\y1) {};
                
    \draw[thick] let    \p{1} = (b2)    in (a1 |- origin)
        node[circle,fill=black!80,inner sep=1pt,
        label={[align=center]above:
                $o_1$ \\ 
                (1,1)
                }] at (\x1,\y1) {};    

    \draw[thick] let    \p{1} = (b3)    in (a1 |- origin)
        node[circle,fill=black!80,inner sep=1pt,
        label={[align=center,xshift=-0.3cm, yshift=-1.0cm]：
                $o_3$ \\ 
                (19,19)
                }] at (\x1,\y1) {};    
                
\end{tikzpicture}
    \caption{Example for Big Brother Logic set up}
    \label{fig:big_brother_exp}
\end{figure}

This problem can be represented by the tuple $(V,D,O,I,G,\mathbb{F})$, where:

\begin{itemize}
    \item $V\ =\ \{x,\ y,\ direction,\ query\}$
    \item $D\ :\ dom(x)=dom(y)=\{-20,\dots,20\}$; $dom(direction)=\{-179,\dots,180\}$; and, $dom(query)=\{0,1\}$
    \item $O\ :\ $ {\move($dx,dy$)} and {\turn($d$)}
    \item $I\ =\ \{x=5,\ y=5,\ direction=45\}$
    \item $G\ =\ \{query=1\}$
    \item $\extf\ :\ $ {\checking}: $query$ $\mapsto \{true,false\}$,
\end{itemize}
\noindent in which $query$ is a goal query, which we describe later.

This is just a simple example to demonstrate our model. Variables $x$ and $y$ represent coordinates of camera $a_1$, and $direction$ determines which way $a_1$ is facing. Since $a_2$ and all other objects are fixed, we model them in an external state handled by the external functions, which lightens the domain and reduces the state space. However, we could also model the positions of these as part of the planning model if desired. For the domain of the variables, although the F-STRIPS planner we used does not support using real numbers,  using integers is enough to show that our model can work on continuous problems.

We need to check the knowledge queries in the actions (precondition, conditional-effects), or goals. Both action \move($dx,dy$) and action \turn($d$) can change agents' perspectives, and therefore, can influence knowledge. To simplify the problem, instead of moving $1$ or turning $1$ degree per action, we can set up a reasonable boundary for actions, such as turning $\pm45$ degree per action and move at speed of $2$ for each direction on $x$ and $y$.


As for the goal conditions, some queries can be achieved for the problem in Figure~\ref{fig:big_brother_exp} without doing any actions, such as:
\begin{enumerate}
    \item Single Knowledge query: $\neg K_{a_2} o_3$; $K_{a_1} o_3$
    \item Nested Knowledge query: $\neg K_{a_1} S_{a_2} o_3$; $S_{a_1} S_{a_2} o_3$
    \item Group Knowledge query: $K_{a_1,a_2} o_3$; $K_{a_1,a_2} o_2$
    \item Distributed Knowledge query: $\neg DK_{a_1,a_2} (o_1,3)$; $DK_{a_1,a_2} (o_1,1)$
    \item Common Knowledge query: $CK_{a_1,a_2} o_2$; $CK_{a_1,a_2} S_{a_1} o_3$
\end{enumerate}
Most of them are intuitive. From goal 2, although $S_{a_1} S_{a_2} o_3$ is true because $a_1$ can see $a_2$'s location, range of vision and direction, so $a_1$ knows whether $a_2$ can see $o_3$, the formula $K_{a_1} S_{a_2} o_3$ is false because there is no action that $a_1$ can do to make $a_2$ see $o_3$. For goal 5, $CK_{a,b} S_{a_1} o_3$ is true without any action, because, by calling the external function, the common local state for $a_1$ and $a_2$ would be the location of all three values, both $a_1$ and $a_2$ and the value of $o_2$. Then, $S_{a_1} o_3$ would be evaluated true based on the common local state.

In addition, there are some query that would be achieved through valid plans:
\begin{enumerate}
    \item $K_{a_1,a_2} o_1$: \move($-2,-2$), \move($-2,-2$)
    \item $CK_{a_1,a_2} o_1$: \move($-2,-2$), \move($-2,-2$)
    \item $S_{a_2} S_{a_1} o_1:$ \move($-2,2$), \move($-2,2$)
    \item $K_{a_1} o_1 \land \neg K_{a_2} K_{a_1} o_1$(BBL11): {\move}($-2,1$), {\move}($-2,2$), {\move}($-1,2$), {\move}($0,2$), {\move}($0,2$), {\move}($0,2$), {\turn}($-45$), {\turn}($-44$)
    \item $\neg K_{a_1} S_{a_2} S_{a_1} o_1 \land S_{a_1} o_1$(BBL12): {\turn}($-44$), {\turn}($-45$), {\turn}($-45$), {\move}($1,2$), {\move}($2,2$), {\move}($2,2$), \\{\move}($2,2$), {\move}($2,2$), {\move}($2,2$), {\move}($-1,2$)
\end{enumerate}
The first one is clear. There is more than one way to let both of them know value $o_1$, and the planner returns the optimal solution. The second one is also intuitive: to achieve common knowledge in a BBL problem, they need to both see the item and both see each other. The difference between the last two are a bit trickier. To avoid $a_2$ that knows whether $a_1$ can see $o_1$, the cheapest plan returned by planner was for $a_1$ to move out of $a_2$'s eye sight. The last one is the most difficult to solve. Not only should $a_1$ see $o_1$, but also $a_1$ should know that originally $a_2$ cannot see that $a_1$ sees $o_1$. This is done by decomposing into three facts: ``$a_1$ sees $o_1$";``$a_2$ cannot see whether $a_1$ sees $o_1$"; and, ``$a_1$ can see that whether $a_2$ can see whether $a_1$ sees $o_1$".

\subsubsection{Results}

Table~\ref{tab:sn} shows the results for our problems in the BBL domain. A plan of $\infty$ means that the problem is unsolvable -- no plan exists. While the perspective function in BBL depends on a \textit{geometric model} based on agent's position, direction and facing angle, the results show that  with proper usage of our functional STRIPS planner, we can represent continuous domains. Our agent epistemic solver is able to reason about other the agents' epistemic states (vision) and derive plans based on these for non-trivial intricate goal that we believe would be difficult to encode propositionally, demonstrating that our model can handle important problems in vision-based domains.

\noindent
\begin{table}[!th]
    \centering
    
        \begin{tabular}{lcccccccccl}
         \toprule

        \multirow{3}{*}{Problem}
        & \multicolumn{4}{c}{Parameters} & \multicolumn{5}{c}{Performance} &  \\
         \cmidrule(lr){2-5} \cmidrule(lr){6-10}
        & \multirow{2}{*}{$|a|$} & \multirow{2}{*}{$d$} & \multirow{2}{*}{$|g|$} & \multirow{2}{*}{$|p|$} & \multirow{2}{*}{$|gen|$} & \multirow{2}{*}{$|exp|$} & \multirow{2}{*}{$|calls|$} & \multicolumn{2}{c}{TIME(s)} &\multirow{2}{*}{Goal} \\
        & & & & & & & & {$calls$} & Total & \\
        \midrule
        BBL01 & $2$ & $1$ & $1$ & $0$ & $1$ & $0$ & $2$ & $0.000$ & $0.002$ &
        $K_{a_1} o_2$ \\
        BBL02 & $2$ & $1$ & $1$ & $2$ & $115$ & $2$ & $232$ & $0.007$  & $0.009$& $K_{a_1} o_1$ \\
        BBL03 & $2$ & $1$ & $1$ & $\infty$ & $605160$ & $all$ & $1210320$ & $36.1$ & $78.2$& $K_{a_2} o_3$ \\
        BBL04 & $2$ & $2$ & $1$ & $2$ & $115$ & $2$ & $232$ & $0.015$ & $0.017$ & $K_{a_1} K_{a_2} o_1$ \\
        BBL05 & $2$ & $1$ & $1$ & $0$ & $1$ & $0$ & $2$ & $0.000$ & $0.002$ & $DK_{a_1,a_2}\{o_1,o_2,o_3\}$ \\
        BBL06 & $2$ & $1$ & $1$ & $0$ & $1$ & $0$ & $2$ & $0.000$ & $0.002$ & $EK_{a_1,a_2} o_2$ \\
        BBL07 & $2$ & $1$ & $1$ & $2$ & $115$ & $2$ & $232$ & $0.019$ & $0.021$& $EK_{a_1,a_2} \{o_1,o_2\}$ \\
        BBL08 & $2$ & $1$ & $1$ & $0$ & $1$ & $0$ & $2$ & $0.000$ & $0.002$ & $CK_{a_1,a_2} o_2$ \\
        BBL09 & $2$ & $1$ & $1$ & $2$ & $115$ & $2$ & $232$ & $0.048$ & $0.050$& $CK_{a_1,a_2} \{o_1,o_2\}$ \\
        BBL10 & $2$ & $2$ & $1$ & $2$ & $115$ & $2$ & $232$ & $0.016$ & $0.018$& $K_{a_1} DK_{a,b} \{o_1,o_2,o_3\}$ \\
        BBL11 & $2$ & $2$ & $2$ & $8$ & $59260$ & $7509$ & $187332$ & $7.650$ & $8.382$ & $K_{a_1} o_1 \land \neg K_{a_2} K_{a_1} o_1$ \\
        BBL12 & $2$ & $3$ & $2$ & $9$ & $15842$ & $592$ & $47626$ & $2.380$ & $2.472$ & $S_{a_1} o_1 \land  \neg K_{a_1} S_{a_2} S_{a_1} o_1 $ \\
        \bottomrule
    \end{tabular}
    \caption{Experiments Results for the Big Brother Logic domain}
    \label{tab:bbl}
\end{table}

\subsection{Social-media Network}

The \emph{Social-media Network} (SN) domain is an abstract network which agents can befriend each other to read their updates, etc., based on typical social media models. We extend two-way one-time communication channels from a classical gossip problem \cite{DBLP:conf/ecai/CooperHMMR16} into two-way, all-time communication channels, and add the concepts of secret messages. By decomposing secrets into messages and posting through an agent's friendship network, we model how  secrets can be shared between a group of individuals not directly connected without anyone on the network knowing the secret, and some secrets can be shared within a group excepting for some individuals. The former would be ,ike spies sharing information with each other through the resistance's personal page, and the latter would be a group arranging a surprise party for a mutual friend.

\subsubsection{Examples}
Let $a,b,c,d,e$ be five agents in the SN, with friendship links shown in Figure~\ref{fig:social_media_network}. Their friend relations are represented by full lines between each agent. 

\begin{figure}
    \centering
\newcommand{\base}{2.5}
\newcommand{\piii}{180}
\newcommand{\range}{30}
\newcommand{\size}{3}
\newcommand{\oversize}{6}
\makeatletter
\newcommand{\gettikzxy}[3]{%
  \tikz@scan@one@point\pgfutil@firstofone#1\relax
  \edef#2{\the\pgf@x}%
  \edef#3{\the\pgf@y}%
}
\makeatother

\begin{tikzpicture}

    \coordinate (a) at ({\base*cos(1/10*\piii)}, {\base*sin(1/10*\piii)});
    
    \coordinate (b) at (0, {\base});
    
    \coordinate (c) at ({-\base*cos(1/10 * \piii)}, {\base*sin(1/10*\piii)});
    
    \coordinate (d) at ({-\base*sin(1/5*\piii)}, {-\base*cos(1/5*\piii)});
    
    \coordinate (e) at ({\base*sin(1/5* \piii)}, {-\base*cos(1/5 * \piii)});
    
    \draw[black] ({1.5*\base},{1.5*\base}) -- (-{1.5*\base},{1.5*\base}) -- (-{1.5*\base},-{1.5*\base}) -- ({1.5*\base},-{1.5*\base}) -- cycle;
    \draw[thick] 
        node[circle,fill=red!80,inner sep=2pt,
        label={[align=center]below:$a$}] at (a) {};
    \draw[thick] 
        node[circle,fill=red!80,inner sep=2pt,
        label={[align=center]below:$b$}] at (b) {};
    \draw[thick] 
        node[circle,fill=red!80,inner sep=2pt,
        label={[align=center]below:$c$}] at (c) {};
    \draw[thick] 
        node[circle,fill=red!80,inner sep=2pt,
        label={[align=center]below:$d$}] at (d) {};
    \draw[thick] 
        node[circle,fill=red!80,inner sep=2pt,
        label={[align=center]below:$e$}] at (e) {};
    
    \coordinate (origin) at (0,0);
    \coordinate (a1) at (0,0);
    \coordinate (a1up) at ({cos(\range)*\oversize},{sin(\range)*\oversize});
    \coordinate (a1down) at ({cos(\range)*\oversize},-{sin(\range)*\oversize});
    \coordinate (a2) at ({cos(\range)*\size*2},0);
    \coordinate (a2up) at ({cos(\range)*\size*2-cos(\range)*\oversize},{sin(\range)*\oversize});
    \coordinate (a2down) at ({cos(\range)*\size*2-cos(\range)*\oversize},-{sin(\range)*\oversize});
    
    \coordinate (b1) at (-{cos(\range)*\size},0);
    \coordinate (b2) at ({cos(\range)*\size},0);
    \coordinate (b3) at ({cos(\range)*\size*3},0);
    \coordinate (b4) at ({cos(\range)*\size},{sin(\range)*\oversize*3/4});
    
    \draw[black,<->] (a) -- (b);
    \draw[black,<->] (a) -- (c);
    \draw[black,<->] (a) -- (d);
    \draw[black,<->] (b) -- (e);
    \draw[black,<->] (c) -- (d);
    \draw[black,<->] (d) -- (e);
    \draw[black,<->,dashed] (b) -- (c);
    \draw[black,<->,dashed] (e) -- (c);

\end{tikzpicture}



    \caption{Example for Social-media Network}
    \label{fig:social_media_network}
\end{figure}

Let $g$ be a friend for all agents and $g$ wants to share a secret. We assume the social network is in $g$'s perspective directly, and the network is fixed for simplicity. Let all the epistemic queries that we concern of be a set $Q$, and $p_1,p_2,p_3$ as three parts of the secret $P$. Any problem by this set up can be represented by a tuple $(A,V,D,O,I,G,\mathbb{F})$, where:
\begin{itemize}
    \item $A$ = \{$a,b,c,d,e$\} 
    \item $V$ = $\{(\mathit{friended}\ i\ j)$, $(post\ p)$ $(q)\mid i,j\in A,\ p\in P,\ q\in Q\}$
    \item $D$ : $dom(\mathit{friended}\ i\ j)$ = $dom(q)$ = \{$0,1$\}, $dom(post\ p)=A$, where $i,j\in A,\ p\in P,\ q\in Q$ 
    \item $O$ : {\post}($i,p$), where $i\in A,\ p \in P$
    \item $I$ = \{ $(\mathit{friended}\ a\ b)=1$, $(\mathit{friended}\ a\ c)=1$, $(\mathit{friended}\ a\ d)=1$, $(\mathit{friended}\ b\ e)=1$, $(\mathit{friended}\ c\ d)=1$, ${(\mathit{friended}\ d\ e)=1}$ \}
    \item $G$: see below
    \item $\extf$ : ({\checking}: $q$) $\mapsto \{true,false\}$
\end{itemize}

$(\mathit{friended}\ i\ j)$ represents whether $i$ and $j$ are friends with each other, which is a domain dependent relation, and $q$ covers all the epistemic queries. $(\post\ i\ p)$ specifies that the message $p$ is posted on agent $i$'s page. The $I$ covers the friendship relations in Figure~\ref{fig:social_media_network}, and no message has been posted yet. Similarly, the action \post\ is the only source for epistemic relation changes. Therefore, we update all desired queries $Q$ as condition effects.

Goals that we have tested are shown in Table~\ref{tab:sn}.
For some epistemic formulae between $a$ and $b$, since they are friends, simply posting the message in any of their personal page is sufficient to establish common knowledge about the information in that post. But for the knowledge between $a$ and $e$, for example, $EK_{a,e} p_1$, the message needs to be posted on the page of a mutual friend, such as agent $b$. In addition, since $a$ and $e$ are not friends, in each of their perspectives of the world, there is no information (variables) describing others. Therefore, both $EK_{a,e} EK_{a,e} p_1$ and $CK_{a,e} p_1$ are not possible without changing the network structure. 

Below are some of the sample goals that we have tested, with the identifying name from Table~\ref{tab:sn} and the plan that achieves that goal:

\begin{itemize}
    \item $K_a p_1$ (SN01), $K_a K_b p_1$ (SN02), $EK_{a,b} p_1$ (SN03) and $CK_{a,b} p_1$ (SN06): {\post}($a$,$p_1$)
    \item $CK_{a,e} p_1$ (SN07): $\infty$
    \item $K_a \{p_1,p_2,p_3\}$ (SN08), $EK_{a,b} \{p_1,p_2,p_3\}$ (SN04) and $DK_{a,b} \{p_1,p_2,p_3\}$ (SN05):\\ {\post}($a$,$p_1$), {\post}($a$,$p_2$), {\post}($a$,$p_3$)
\end{itemize}

A plan of $\infty$ means that the problem is unsolvable.

The other types of goals are secretive:
\begin{itemize}
    \item $K_a \{p_1,p_2,p_3\} \land \neg K_b \{p_1,p_2,p_3\}$ (SN09): {\post}($a$,$p_1$), {\post}($a$,$p_2$), {\post}($c$,$p_3$)
    \item $K_a \{p_1,p_2,p_3\} \land \neg K_b \{p_1,p_2,p_3\}\land \neg K_c \{p_1,p_2,p_3\}$ (SN10): {\post}($a$,$p_1$), {\post}($b$,$p_2$), {\post}($c$,$p_3$)
\end{itemize}

The aims are to share the whole secret with $a$ while $b$ must not know the whole secret, but can know some of it. Some parts of it, such as $p_3$, needs to be shared in the page that $b$ does not have access to. Then, $c$ must also not know the secret, the secret now needs to be posted in the way that $b$ and $c$ do not see some parts respectively, while $a$ sees all the parts.

Finally, we look on those two desired scenarios in the introduction of SN:
\begin{itemize}
    \item Sharing with a spy:
    \\$K_a \{p_1,p_2,p_3\} \land \neg K_b \{p_1,p_2,p_3\} \land \neg K_c \{p_1,p_2,p_3\}  \land \neg K_d \{p_1,p_2,p_3\}  \land \neg K_e \{p_1,p_2,p_3\}$ (SN11): \\{\post}($a$,$p_1$), {\post}($b$,$p_2$), {\post}($c$,$p_3$)
    \item Surprise party:
    \\$\neg K_a \{p_1,p_2,p_3\} \land K_b \{p_1,p_2,p_3\} \land K_c \{p_1,p_2,p_3\}  \land K_d \{p_1,p_2,p_3\}  \land K_e \{p_1,p_2,p_3\}$ (SN13): $\infty$
\end{itemize}

Sharing a secret to some specific individual without anyone else knowing the secret can be done with the current network. However, if we alter the problem a bit by adding a friend relation between $b$ and $c$ (SN12), and apply the same goal conditions as SN11, no plan would be found by the planner, because $c$ sees everything $a$ can see, and there is no way to share some information to $a$ without $c$ seeing it.

For sharing a secret surprise party for agent $a$ among all the agents without $a$ knowing it, the messages needs to shared in such a way that $a$ is not able to get a complete picture of the secrets. In the current set up of the problem (SN13), since $a$ sees everything seen by $c$, there is no way to held a surprise party without $a$ knowing it. However, by adding a friend relation between $e$ and $c$ (SN14), the planner returns with the plan: {\post}($e$,$p_1$), {\post}($e$,$p_2$), {\post}($e$,$p_3$).

Table~\ref{tab:sn} shows the results for our problems in the social-media network domain. The results show that is able to reason about nested knowledge and also group knowledge to achieve an intricate goal, which means our model can handle variety of knowledge relations at same time within reasonable time complexity.

\noindent
\begin{table}[!th]
    \centering
    
        \begin{tabular}{lcccccccccl}
         \toprule

        \multirow{3}{*}{Problem}
        & \multicolumn{4}{c}{Parameters} & \multicolumn{5}{c}{Performance} &  \\
         \cmidrule(lr){2-5} \cmidrule(lr){6-10}
        & \multirow{2}{*}{$|a|$} & \multirow{2}{*}{$d$} & \multirow{2}{*}{$|g|$} & \multirow{2}{*}{$|p|$} & \multirow{2}{*}{$|gen|$} & \multirow{2}{*}{$|exp|$} & \multirow{2}{*}{$|calls|$} & \multicolumn{2}{c}{TIME(s)} &\multirow{2}{*}{Goal} \\
        & & & & & & & & {$calls$} & Total & \\
        \midrule
        SN01 & $5$ & $1$ & $1$ & $1$ & $16$ & $2$ & $42$ & $0.002$ & $0.004$ & $K_i p_1$ \\
        SN02 & $5$ & $2$ & $1$ & $1$ & $16$ & $2$ & $42$ & $0.003$ & $0.005$ & $K_i K_j p_1$ \\
        SN03 & $5$ & $1$ & $1$ & $1$ & $16$ & $2$ & $42$ & $0.003$ &  $0.004$ & $EK_{i,j} p_1$ \\
        SN04 & $5$ & $1$ & $1$ & $3$ & $216$ & $92$ & $3286$ & $0.484$ & $0.489$ & $EK_{i,j} \{p_1,p_2,p_3\}$ \\
        SN05 & $5$ & $1$ & $1$ & $3$ & $216$ & $92$ & $3286$ & $0.566$ & $0.571$& $DK_{i,j} \{p_1,p_2,p_3\}$ \\
        SN06 & $5$ & $1$ & $1$ & $1$ & $16$ & $2$ & $42$ & $0.006$ & $0.007$ & $CK_{i,j} p_1$ \\
        SN07 & $5$ & $1$ & $1$ & $\infty$ & $216$ & $all$ & $7776$ & $0.825$ &  $0.838$ & $CK_{i,k} p_1$ \\
        SN08 & $5$ & $1$ & $1$ & $3$ & $216$ & $92$ & $3286$ & $0.216$ & $0.221$ & $K_i \{p_1,p_2,p_3\}$ \\
        SN09 & $5$ & $1$ & $2$ & $3$ & $306$ & $107$ & $7652$ & $0.759$ &  $0.767$ & $K_i \{p_1,p_2,p_3\} \land \neg K_j \{p_1,p_2,p_3\}$ \\
        SN10 & $5$ & $1$ & $3$ & $3$ & $614$ & $189$ & $20334$ & $2.049$ &  $2.069$ & $K_i \{p_1,p_2,p_3\} \land \neg K_{j\land k} \{p_1,p_2,p_3\}$ \\
        SN11 & $5$ & $1$ & $5$ & $3$ & $901$ & $265$ & $47570$ & $4.841$ &  $4.886$ & $K_i \{p_1,p_2,p_3\} \land \neg K_{other} \{p_1,p_2,p_3\}$ \\
        SN12$^{1}$ & $5$ & $1$ & $5$ & $\infty$ & $2808$ & $all$ & $505400$ & $57.5$ &  $58.0$ & $K_i \{p_1,p_2,p_3\} \land \neg K_{other} \{p_1,p_2,p_3\}$ \\
        SN13 & $5$ & $1$ & $2$ & $\infty$ & $432$ & $all$ & $31104$ & $5.589$ &  $5.629$ & $ \neg K_i \{p_1,p_2,p_3\} \land K_{other} \{p_1,p_2,p_3\}$ \\
        SN14$^{2}$ & $5$ & $1$ & $2$ & $3$ & $418$ & $278$ & $19964$ & $4.049$ &  $4.073$ & $ \neg K_i \{p_1,p_2,p_3\} \land K_{other} \{p_1,p_2,p_3\}$ \\
        \bottomrule
    \end{tabular}
    \caption{Experiments Results for the Social-media Network domain}
    \label{tab:sn}
\end{table}

\subsection{Discussion}


Overall, computationally, our solution  outperforms $PDKB$, which is state-of-the-art for epistemic planning problems. The number of agents and depth of epistemic relations do not not increase the computation time as rapidly as the $PDKB$ planner. In the terms of expressiveness, our solution demonstrates its capability to handle variety of complex epistemic relations, such as, nested knowledge, distributed knowledge and common knowledge.

The results show that the computational time depends heavily on how many times have the external functions are called, which is actually determined by the number of generated nodes and expanded nodes. Moreover, the number of nodes involved in the search is impacted by some  factors, such as, the length of the plan, the algorithm that the planner uses, and also the scale of the problem itself.

The results also show that the external solver takes up a large part of the execution time. This is a prototype implementation and this represents an opportunity for performance optimisation of our code base.


\section{Conclusions}

In this work, we introduced a new epistemic planning model called the  agent perspective model, driven from the intuition: ``What you know is what you see". This perspective based model allows us to evaluate epistemic formula, even nested, distributed or common epistemic relations, based on the simple concept of defining an agent's local state. Then, by separating the planning task from epistemic reasoning with functional STRIPS, we proposed an expressive and flexible solution for most of the epistemic planning problems without an expensive pre-compilation step. We implemented our model well-known epistemic planning benchmarks and two new scenarios based on different perspective functions. The results not only show our model can solve the epistemic benchmarks efficiently, but also demonstrate a variety types of epistemic relation can be handled. Our work is the first to delegate epistemic reasoning to an external solver.

For future work, there are three ways to extend our model. First, extending the model to belief rather than knowledge. The success of our model is dependent on the property $f_i(s) \subseteq s$ for perspective functions, which implies beliefs cannot be false. Extending to belief would be an important step. Second, we can improve our model by allowing simplified disjunctive knowledge relations, such as that proposed by \citet{DBLP:conf/aaai/MillerFMPS16}. Finally, investigating event-based epistemic planning domains within our model would be another potential research direction. Our intuition is that those problems would challenge our model, however, it would be possible to including seeing events as well as seeing variables or agents.

\subsection*{Acknowledgements}

The authors thank Andreas Herzig for his insightful discussions on the link between knowledge and seeing, and for inspiring the idea of the social network domain.




\bibliographystyle{model1-num-names}







\end{document}